\documentclass{article}

% if you need to pass options to natbib, use, e.g.:
\PassOptionsToPackage{numbers, compress}{natbib}
% before loading neurips_2024

% ready for submission
%\usepackage{neurips_2024}

% to compile a preprint version, e.g., for submission to arXiv, add add the
% [preprint] option:
     \usepackage[preprint]{neurips_2024}

% to compile a camera-ready version, add the [final] option, e.g.:
%     \usepackage[final]{neurips_2024}

%\PassOptionsToPackage{square}{natbib}
\newif\ifnatbib
\natbibfalse
\ifnatbib
\usepackage{neurips_2024}
\else
%\usepackage[nonatbib]{neurips_2024}
%\fi

\usepackage[utf8]{inputenc} % allow utf-8 input
\usepackage[T1]{fontenc}    % use 8-bit T1 fonts
\usepackage{url}            % simple URL typesetting
\usepackage{booktabs}       % professional-quality tables
\usepackage{amsfonts}       % blackboard math symbols
\usepackage{nicefrac}       % compact symbols for 1/2, etc.
\usepackage{microtype}      % microtypography
      % colors
%\usepackage{hyperref}
%\usepackage{notations}

\usepackage[dvipsnames]{xcolor}      
\usepackage[utf8]{inputenc}
\usepackage[T1]{fontenc}    % use 8-bit T1 fonts
\usepackage[colorlinks=true, linkcolor=BrickRed, urlcolor=blue, citecolor=Blue, anchorcolor=blue, backref=page]{hyperref}
% Customize list of backreferences https://tex.stackexchange.com/a/183735/1340
\renewcommand*{\backref}[1]{}
\renewcommand*{\backrefalt}[4]{%
    \ifcase #1%
          \or [Cited on page~#2.]%
          \else [Cited on pages~#2.]%
    \fi%
    }
\usepackage{amsmath}
\usepackage{amsthm}
\usepackage{amssymb}
\usepackage{bm}
\usepackage{url}            %
\usepackage{booktabs}
\usepackage{amsfonts} 
\usepackage{comment}
\usepackage{tikz}
\usetikzlibrary{shapes.geometric, arrows, shadows, bayesnet}
\usepackage{subcaption}
\usepackage{xcolor}

\usepackage{enumitem}
\usepackage{tabularx}

\numberwithin{equation}{section}
% Bold Alphabet 
%%%%%%%%%%%%%%%%%%%%%%%%%%%%%%%%%%%%%%%%%%%%%%%%%%%%%%%%%%%%%%%%%%%%%%%%%%%%%%%%
\renewcommand{\mathbf}[1]{{\bm{#1}}}
\newcommand{\ab}{\mathbf{a}}
\newcommand{\bb}{\mathbf{b}}

\newcommand{\fb}{\mathbf{f}}
\newcommand{\gb}{\mathbf{g}}

\newcommand{\vb}{{\bm{v}}}

\newcommand{\xb}{\mathbf{x}}
\newcommand{\yb}{\mathbf{y}}
\newcommand{\zb}{\mathbf{z}}

% Upper case

\newcommand{\Ub}{\mathbf{U}}
\newcommand{\Vb}{\mathbf{V}}

\newcommand{\Xb}{\mathbf{X}}
\newcommand{\Yb}{\mathbf{Y}}
\newcommand{\Zb}{\mathbf{Z}}

% Calligraphic fonts
%%%%%%%%%%%%%%%%%%%%%%%%%%%%%%%%%%%%%%%%%%%%%%%%%%%%%%%%%%%%%%%%%%%%%%%%%%%%%%%%

\newcommand{\Gcal}{\mathcal{G}}

\newcommand{\Mcal}{\mathcal{M}}
\newcommand{\Ncal}{\mathcal{N}}

\newcommand{\Xcal}{\mathcal{X}}
\newcommand{\Ycal}{\mathcal{Y}}
\newcommand{\Zcal}{\mathcal{Z}}

% Widely accepted definitions (from MKolar, James & Akshay)
%%%%%%%%%%%%%%%%%%%%%%%%%%%%%%%%%%%%%%%%%%%%%%%%%%%%%%%%%%%%%%%%%%%%%%%%%%%%%%%%
 % Complex numbers
 % \mathbb(D) 
\newcommand{\EE}{\mathbb{E}} % Expectation
 % Variance
 % Indicator
 % Arbitrary field
 % Loss
 % Median
 % Natural numbers
 % Probability
 % Rationals
\newcommand{\RR}{\mathbb{R}} % Real numbers
 %
 %
 % Integers

 % support of a function
 % support of a function
 % support of a function
 % support of a function

% bold greek letters
%%%%%%%%%%%%%%%%%%%%%%%%%%%%%%%%%%%%%%%%%%%%%%%%%%%%%%%%%%%%%%%%%%%%%%%%%%%%%%%%

\newcommand*{\Psib}{\bm{\Psi}}

\newcommand*{\mub}{\bm{\mu}}
\newcommand*{\nub}{\bm{\nu}}

\newcommand*{\phib}{\bm{\phi}}

 % Children
 % Descendants
 % Descendants
 % Ancestors

\usepackage{amsthm,thmtools,thm-restate}
\usepackage{cleveref}
\RequirePackage{amsmath}
\RequirePackage{amssymb}
\RequirePackage{mathtools}
\ifx\proof\undefined 
\RequirePackage{amsthm}
\fi
\crefname{figure}{Fig.}{Figs.}
\crefname{definition}{Defn.}{Defns.}
\crefname{corollary}{Cor.}{Cors.}
\crefname{proposition}{Prop.}{Props.}
\crefname{theorem}{Thm.}{Thms.}
\crefname{remark}{Remark}{Remarks}
\crefname{principle}{Principle}{Principles}
\crefname{lemma}{Lemma}{Lemmata}
\crefname{claim}{Claim}{Claims}
\crefname{table}{Tab.}{Tabs.}
\crefname{section}{\S}{\S\S}
\crefname{subsection}{\S}{\S\S}
\crefname{subsubsection}{\S}{\S\S}
\crefname{assumption}{Asm.}{Asms.}
\crefname{appendix}{Appx.}{Appx.}
\crefname{equation}{Eq.}{Eqs.}
\crefname{example}{Example}{Examples}

%%%%%%%%  amsmath %%%%%%%%%%
% Proofs/ Theorems/ Lemmas etc
%%%%%%%%%%%%%%%%%%%%%%%%%%%%%%%%%%%%%%%%%%%%%%%%%%%%%%%%%%%%%%%%%%%%%%%%%%%%%%%%

\ifx\BlackBox\undefined
\newcommand{\BlackBox}{\rule{1.5ex}{1.5ex}}  % end of proof
\fi

\ifx\QED\undefined
\def\QED{~\rule[-1pt]{5pt}{5pt}\par\medskip}
\fi

\ifx\proof\undefined
\newenvironment{proof}{\par\noindent{\bf Proof\ }}{\hfill\BlackBox\\[2mm]}
\fi
% Proof Sketch
\ifx\proofsketch\undefined

\fi

\theoremstyle{plain} % default: italics
\ifx\theorem\undefined
\newtheorem{theorem}{Theorem}
\numberwithin{theorem}{section}
\fi
\ifx\property\undefined

\fi
\ifx\corollary\undefined

\fi
\ifx\lemma\undefined
\newtheorem{lemma}[theorem]{Lemma}
\fi
\ifx\proposition\undefined
\newtheorem{proposition}[theorem]{Proposition}
\fi
\ifx\assum\undefined

\fi

\theoremstyle{definition} % roman text
\ifx\definition\undefined
\newtheorem{definition}[theorem]{Definition}
\fi
\ifx\assum\undefined
\newtheorem{assumption}[theorem]{Assumption}
\fi

\theoremstyle{remark} % non bold name
\ifx\remark\undefined

\fi
\ifx\example\undefined
\newtheorem{example}[theorem]{Example}
\fi
\ifx\lemma\undefined

\fi
\ifx\conjecture\undefined

\fi
\ifx\fact\undefined

\fi
\ifx\claim\undefined

\fi
\ifx\assum\undefined

\fi

\DeclareMathOperator*{\Cov}{Cov}

\delimitershortfall-1sp

\newcommand{\X}[0]{\mathbf{X}}
\newcommand{\Z}[0]{\mathbf{Z}}
\newcommand{\U}[0]{\mathbf{U}}

\newcommand{\Y}[0]{\mathbf{Y}}

\newcommand{\V}[0]{\mathbf{V}}

\newcommand{\x}[0]{\mathbf{x}}
\newcommand{\y}[0]{\mathbf{y}}
\newcommand{\z}[0]{\mathbf{z}}
\newcommand{\w}[0]{\mathbf{w}}
\newcommand{\lct}[0]{\mathbf{t}}

\newcommand{\W}[0]{\mathbf{W}}
\newcommand{\DF}[0]{\mathtt{DeconFlow}}

\newcommand{\doo}[0]{\mathrm{do}}

\newcommand{\indep}{\perp \!\!\! \perp}

\newcommand{\patrick}[1]{{\textcolor{red}{#1 [PB].}}}
\newcommand{\michel}[1]{{\textcolor{blue}{#1 [MB].}}}
\newcommand{\frederick}[1]{{\textcolor{orange}{#1 [FE].}}}

\usepackage{wrapfig}
\usepackage{algorithm}
\usepackage{algorithmic}

\ifnatbib
\else
\fi

% The following packages will be automatically loaded:
% amsmath, amssymb, natbib, graphicx, url, algorithm2e

\title{Controlling for discrete unmeasured confounding in nonlinear causal models}

% The \author macro works with any number of authors. There are two commands
% used to separate the names and addresses of multiple authors: \And and \AND.
%
% Using \And between authors leaves it to LaTeX to determine where to break the
% lines. Using \AND forces a line break at that point. So, if LaTeX puts 3 of 4
% authors names on the first line, and the last on the second line, try using
% \AND instead of \And before the third author name.

\author{%
Patrick Burauel \\
  California Institute of Technology\\
  Pasadena, CA 91125, USA \\
  \texttt{pburauel@caltech.edu} \\
  \And
  Frederick Eberhardt\\ 
  California Institute of Technology\\
  Pasadena, CA 91125, USA \\
  \texttt{fde@caltech.edu} \\
  \And
  Michel Besserve  \\ 
  Max Planck Institute for Intelligent Systems\\
  72076 T\"{u}bingen, Germany \\
  \texttt{michel.besserve@tuebingen.mpg.de} \\  % examples of more authors
  % Affiliation \\
  % Address \\
  % \texttt{email} \\
  % \AND
  % Coauthor \\
  % Affiliation \\
  % Address \\
  % \texttt{email} \\
  % \And
  % Coauthor \\
  % Affiliation \\
  % Address \\
  % \texttt{email} \\
  % \And
  % Coauthor \\
  % Affiliation \\
  % Address \\
  % \texttt{email} \\
}

\usepackage{hyperref}
\begin{document}

\maketitle

\begin{abstract}%
    Unmeasured confounding is a major challenge for identifying causal relationships from non-experimental data. Here, we propose a method that can accommodate unmeasured discrete confounding. Extending recent identifiability results in deep latent variable models, we show theoretically that confounding can be detected and corrected under the assumption that the observed data is a piecewise affine transformation of a latent Gaussian mixture model and that the identity of the mixture components is confounded. We provide a flow-based algorithm to estimate this model and perform deconfounding. Experimental results on synthetic and real-world data provide support for the effectiveness of our approach. 
\end{abstract}
% nine content pages exclusing references
\section{Introduction}

One of the fundamental challenges of causal inference is the separation of the causal effect from confounding, that is, from statistical dependencies that arise from common causes of the candidate cause and effect. In Pearl's notation \cite{Pearl_2009}, this difference is captured by the key contrast between the merely predictive conditional probability $P(Y|X)$ and the causal effect $P(Y|\doo(X))$. When confounding variables are observed, confounding can be controlled for by a variety of covariate adjustment techniques \citep{imbens15,chernozhukov2018double}. The ability to also deconfound the causal effect in the case of \emph{unobserved} confounding is one of the motivations for the use of randomized controlled trials. The challenge of how to deconfound the causal effect \emph{without experimentation} has given rise to a variety of approaches that require different assumptions for identification. These include instrumental variable approaches \cite{imbens15}, approaches based on parametric assumptions (such as in additive noise models \cite{tashiro14, hoyer2008estimation}, linear models \cite{JS, JSb} or binary Gaussian mixture models \cite{gordon2023causal}), or settings where observed confounding is assumed to be representative of unobserved confounding \cite{cinelli20sensitivity}.

In this paper, we contribute to the effort to address unmeasured confounding in purely observational settings by imposing restrictions on the model class. Unlike previous work, we do this by reformulating a confounded cause-effect model as an equivalent latent variable model with a Gaussian mixture prior (see Figure~\ref{fig_models}). We then leverage the results in \cite{kivva2022identifiability} that assure identification (up to an affine transformation) of the latent Gaussian mixtures under the assumption of a piecewise affine mapping between latent and observed variables. We show that further constraints on this model specific to our setting (notably causal order) allow to identify causal effects despite (discrete) unobserved confounding. Implementing this approach with a flow-based deep generative model, 
we show on both synthetic and real data how to estimate the desired causal effects despite unmeasured confounding.
\begin{figure}
    \centering % left bottom right top
\includegraphics[width=1\linewidth]{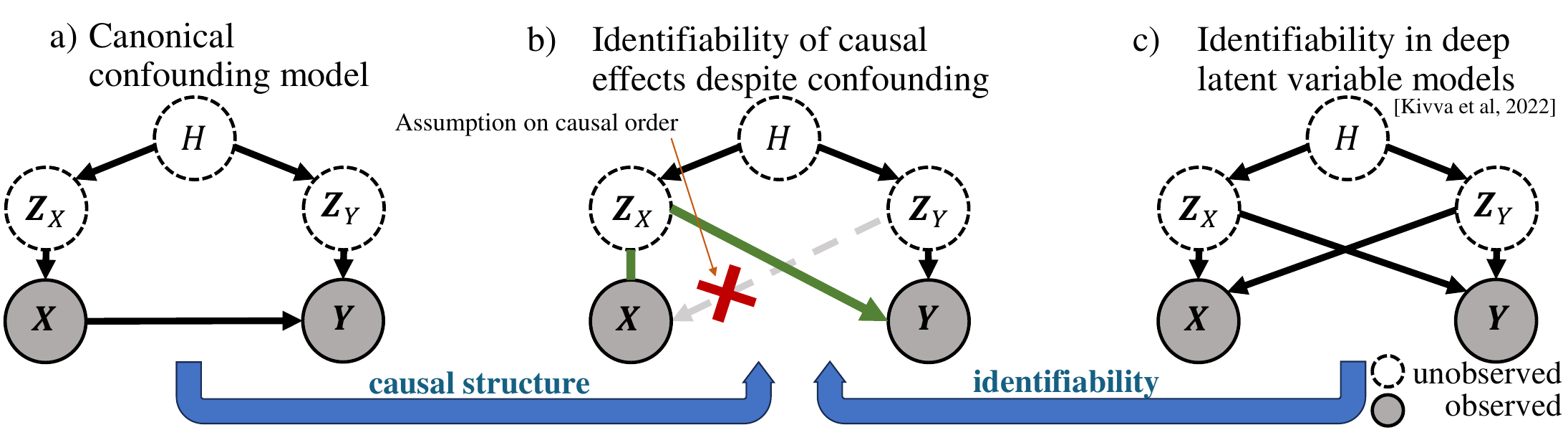}
    \caption{On the left, $\X$ causes $\Y$ and is confounded by $H$. On the right, observed variables $\W=(\X,\Y)$ are generated by latent variables $\Z$, whose identifiability up to affine transformation under model restrictions is shown by \cite{kivva2022identifiability}. We combine knowledge of causal structure with identifiability results for latent variable models to estimate causal effects despite unmeasured confounding (middle).\label{fig_models}}
\end{figure}

\textbf{Notations.} We will use uppercase letters for random variables (e.g. $X$) and lowercase for deterministic ones (e.g. a realization $x$ of $X$). Functions and variables that may be vector-valued will be denoted in bold (e.g. $\X,\,\fb$,~...), and $^\top$ denotes transposition. We will use non-bold capital letters for (deterministic) matrices, e.g. $A$. $P(.)$ denotes a probability distribution, while $p(.)$ denotes the corresponding density with respect to the Lebesgue measure.

\section{Background}\label{sec_background}
\textbf{Canonical cause-effect model in causal inference.} In causal inference, the canonical cause-effect model ``$\Xb$ causes $\Yb$'' can be represented by a pair of so-called \textit{structural equations} \cite{Pearl_2009}: %one for the cause, $\X \coloneqq \fb_X(\Z_X)$ and a second for the effect given the cause, $ \Y \coloneqq \fb_Y(\X, \Z_Y)$. %Plugging one into the other yields
%\begin{align}
%\X &= f_X(\U_X) \,, \quad \quad
%    \Y = f_Y(f_X(\U_X), \U_Y).
%\end{align}
\begin{equation}\label{eq_causeeffect}
\X \coloneqq \fb_X(\Z_X)\,,\quad 
\Y \coloneqq \fb_Y(\X, \Z_Y)\,,\quad \mbox{with} \quad
(\Z_X,\Z_Y) \sim P_Z(\Z_X,\Z_Y)\,,
\end{equation}
where the exogenous variables $(\Z_X,\Z_Y)$ are idiosyncratic error terms representing the influence of external factors on the system, and $(\fb_X,\fb_Y)$ are the causal mechanisms associated to each variable. Causal effects of interests are entailed by the mechanism $\fb_Y$ that describes the influence of $\X$ on $\Y$. 
Confounding then posits the existence of a common cause $H$ that influences both idiosyncratic error terms, 
such that they become dependent when marginalizing with respect to $H$, leading to
\[
\textstyle P_Z(\Z_X,\Z_Y)=\sum_h P(\Z_X|H=h)P(\Z_Y|H=h)P(H=h) \neq P_{\Z_X}(\Z_X)P_{\Z_Y}(\Z_Y)\,,\] 
as depicted in the causal diagram of Figure~\ref{fig_models}a. Accounting for this dependence is necessary for the unbiased estimation of the causal effect but is difficult as $\Z_X$, $\Z_Y$ and $H$ are typically unobserved.\footnote{We provide a brief description of the formalism of structural causal models in Appendix \ref{app_SCM}.} 

\textbf{Identifiability of latent variable models.} The field of \textit{latent variable models} (LVM) \cite{kingma2019introduction,papamakarios2021normalizing} addresses the learnability of models mapping latent variables $\Z$ to observations $\W$ using a so-called mixing function $\Psib$ such that $\W=\Psib(\Z)$, using only samples from the observation distribution $P(\W)$. Identifiability results provide guaranties that, given infinite data, the ground truth $(\Psib,\Z)$ can be recovered from $P(\W)$ in the large sample limit, up to well-characterized ambiguities. We build on results presented by \cite{kivva2022identifiability}, who consider a generative model for observed variables $\W$ of the form:
\begin{align*}
H &\sim \text{Cat}(K_H, \mathbf{\pi})\,, \\
\Z \mid H = h &\sim \mathcal{N}(\mathbf{\mu_h}, \Sigma_h)\,, \\
\W &= \Psib(\Z) ,
\end{align*}
where $\mbox{Cat}(K,\mathbf{\pi})$ denotes a categorical distribution with $K$ categories and an associated vector of event  probabilities $\mathbf{\pi}$. %See Figure \ref{fig_models}a. 
Assuming that $\Psib$ is a piecewise affine injective function (which can be implemented by ReLU networks), \cite{kivva2022identifiability} show identifiability of $\Psib$ and $\Z$ up to an affine transformation \cite[Theorem 3.2]{kivva2022identifiability}. This model is depicted in Figure~\ref{fig_models}c.

\section{Theoretical framework for discrete decounfounding}
\label{sec_discrete_conf_model}
\subsection{General setting}

\textbf{Mapping cause-effect models to LVMs.} 
We consider the above cause-effect model in a setting where an observed $n$-dimensional vector $\X$ causes an observed $m$-dimensional effect vector $\Y$, and where, as commonly assumed, exogenous variables have matching dimensions, i.e. $\Z_X\in \RR^n$ and $\Z_Y\in \RR^m$.\footnote{The special cases of scalar cause and/or effect are included.} 
 We explore the idea that exogenous variables $\Z_X,\Z_Y$ and mechanisms $\fb_X,\fb_Y$ can be used to construct a corresponding LVM, from which we can then leverage the identifiability results to address unmeasured confounding. The key ideas are the following: We can replace the generative mechanism of $\Yb$ based on $\Xb$ by one based on $\Zb_1$ by rewriting
\begin{equation}\label{eq_model}
\Yb\coloneqq \fb_Y(\X,\Zb_Y)= \fb_Y\left(\fb_X(\Z_X),\Zb_Y\right)\triangleq \Psib_Y(\Zb_X,\Zb_Y).    
\end{equation}
If we additionally introduce $\Psib_X(\Z_X,\Z_Y)\triangleq \fb_X(\Z_X)$ and concatenate the exogenous variables into the latent vector $\Z\!=\!(\Z_X,\Z_Y)$, we can build a well-defined mapping $\Psib:\RR^{m+n}\mapsto\RR^{m+n}$ from exogenous latent variables to observed variables $\W\!=\!\left(\X,\Y\right)$ such that $\Psib(\Z)\!=\!\left(\Psib_X(\Z),\Psib_Y(\Z)\right)$. This  corresponds to the LVM diagram of Figure~\ref{fig_models}c. Analogous to the causal model in Figure~\ref{fig_models}a, confounding is induced by a latent variable $H$ that causes both $\Z_X$ and $\Z_Y$.   

\textbf{Leveraging LVM identifiability to address confounding.} 
Concretely, to connect LVM identifiability to causal deconfounding, we introduce the following assumptions on the cause-effect model.

\begin{assumption}\label{assum:partinv} The function $\fb_Y: \RR^n \times \RR^m \to \RR^m$ is  Continuous Deterministic Piecewise Affine (CDPA)\footnote{CDPA functions can be easily implemented  by feedforward neural networks with ReLU activation functions.}
 and for all $\xb\in \RR^n$, $\zb_Y\mapsto \fb_Y(\xb,\zb_Y)$ is injective. 
\end{assumption}
Additionally, we make an assumption about the relation between $\Z_X$ and $\X$:
\begin{assumption}\label{assum:phiinject}
$\fb_X:\RR^n\to \RR^n$ is CDPA and invertible. 
\end{assumption}
In combination, these two assumptions will ensure the mapping $\Psib$ belongs to the function class analyzed in \cite{kivva2022identifiability}. The final key to identifiability is a Gaussian mixture model of the exogenous variables and their confounding induced by $H$. 
\begin{assumption}\label{assum:nondeg}
The exogenous variables are generated according to the following model:
\begin{align}
    H &\sim \mathrm{Cat}(K_H,\boldsymbol{\pi})\,,\\ \label{eq_LgivenH_QgivenH}
    L|H & \sim \mathrm{Cat}(K_L,p(L|H))\,, \quad \quad \quad
    Q|H  \sim \mathrm{Cat}(K_Q,p(Q|H))\,,\\
    \Zb_X|L\!=\!l&\sim \mathcal{N}(\mub_{l},\Sigma_{l}^X)\,, \,\quad \qquad \hspace{2mm}
    \Zb_Y|Q\!=\!q \sim \mathcal{N}(\nub_{q},{\Sigma_{q}^Y})\,,\label{eq_GMM_assumption}%\\\label{eq_GMM_assumption}
    %\X&=\phib(\Z_X)\,, \,  \quad\quad\quad \quad\quad\qquad
    %\Yb = \gb(\Z_X,Z_Y)\,.
\end{align}
where at least one mixture component $l$ that occurs with non-zero probability has $\Sigma_l^X$ positive definite.
\end{assumption}

Note that, without loss of generality, we make the separation of the effect of $H$ on the cause vs.\ the effect side explicit with Eq. \eqref{eq_LgivenH_QgivenH}. 
We now turn to proving that this model setup and the discussed assumptions allow us to identify causal quantities.

\subsection{Identifiability} 

\begin{restatable}{theorem}{identif}\label{thm:identif}
    Under Assumptions \ref{assum:partinv}, \ref{assum:phiinject}, and \ref{assum:nondeg} the mixture components and the causal mechanism for the effect $(\Zb_Y,\fb_Y)$ in Eq.~\eqref{eq_model} is identifiable up to an invertible affine reparameterization of $\Zb_Y$. More precisely, let $(\tilde{\Zb}_Y,\tilde{\fb}_Y)$ be the latent variable and mechanism obtained by fitting the model to the observation distribution $P(\X,\Y)$, then we have, for some $(m\times m)$ invertible matrix $S$ and some $(m\times 1)$ vector $\bb$
    \[
    \fb_Y(\xb,\zb_Y)=\tilde{\fb}_Y(\xb,S \zb_Y+\bb)\,, \quad\mbox{and} \quad\tilde{\Zb}_Y = S \Zb_Y+\bb
    \,.
    \]   
\end{restatable}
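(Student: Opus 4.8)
The plan is to realise the confounded cause--effect model as a single latent variable model of exactly the form analysed by \cite{kivva2022identifiability}, import their affine identifiability result, and then use the causal ordering together with the nondegeneracy in Assumption~\ref{assum:nondeg} to upgrade the generic affine ambiguity to a block-diagonal one on the $(\Zb_X,\Zb_Y)$ split.

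First I would assemble the combined map $\Psib(\zb_X,\zb_Y)=(\fb_X(\zb_X),\fb_Y(\fb_X(\zb_X),\zb_Y))$ and verify it meets the hypotheses of Theorem~3.2 of \cite{kivva2022identifiability}. It is CDPA as a composition of CDPA maps (Assumptions~\ref{assum:partinv}--\ref{assum:phiinject}), and it is injective: $\fb_X(\zb_X)=\fb_X(\zb_X')$ forces $\zb_X=\zb_X'$ by invertibility of $\fb_X$, after which $\fb_Y(\xb,\zb_Y)=\fb_Y(\xb,\zb_Y')$ at the common $\xb=\fb_X(\zb_X)$ forces $\zb_Y=\zb_Y'$ by the partial injectivity of Assumption~\ref{assum:partinv}. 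Moreover $\Zb=(\Zb_X,\Zb_Y)$ is a Gaussian mixture, since conditioning on $(L,Q)=(l,q)$ and using $\Zb_X\indep\Zb_Y\mid(L,Q)$ gives $\Zb\sim\mathcal{N}((\mub_l,\nub_q),\mathrm{blockdiag}(\Sigma_l^X,\Sigma_q^Y))$ with weights $\sum_h p(l|h)p(q|h)\pi_h$. Theorem~3.2 of \cite{kivva2022identifiability} then supplies an invertible affine map $T(\zb)=A\zb+\cbb$ with $\tilde{\Zb}=A\Zb+\cbb$ and $\Psib=\tilde{\Psib}\circ T$, where the fitted $\tilde{\Psib}$ shares the same causal factorisation (it is fit within the class of Assumptions~\ref{assum:partinv}--\ref{assum:nondeg}).

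Next I would determine the block structure of $A$, written with blocks $A_{XX},A_{XY},A_{YX},A_{YY}$ relative to the $(\Zb_X,\Zb_Y)$ split and $\cbb=(\cbb_X,\cbb_Y)$, writing $T=(T_X,T_Y)$ for the coordinate blocks of $T$. Matching the first coordinate of $\Psib=\tilde{\Psib}\circ T$ gives $\fb_X(\zb_X)=\tilde{\fb}_X(T_X(\zb))$; since the left-hand side is independent of $\zb_Y$ and $\tilde{\fb}_X$ is invertible, the affine map $T_X$ cannot depend on $\zb_Y$, forcing $A_{XY}=0$, hence $\tilde{\Zb}_X=A_{XX}\Zb_X+\cbb_X$ and (since $A$ is invertible) $A_{XX},A_{YY}$ invertible. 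The crux, and the step I expect to be the main obstacle, is to show $A_{YX}=0$. For this I would use that both the true and the fitted latent mixtures have block-diagonal per-component covariances, because $\Zb_X\indep\Zb_Y$ given the mixture component in each model. By uniqueness of finite Gaussian-mixture decompositions the components correspond under $T$, so $A\,\mathrm{blockdiag}(\Sigma_l^X,\Sigma_q^Y)\,A^\top$ must itself be block-diagonal; with $A_{XY}=0$ its off-diagonal block equals $A_{XX}\Sigma_l^X A_{YX}^\top$. Selecting a positive-weight component with $\Sigma_l^X$ positive definite, which exists by Assumption~\ref{assum:nondeg}, and using invertibility of $A_{XX}$ forces $A_{YX}^\top=0$.

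Finally, with $A_{YX}=0$ I obtain $\tilde{\Zb}_Y=A_{YY}\Zb_Y+\cbb_Y$, so setting $S=A_{YY}$ (invertible) and $\bb=\cbb_Y$ yields $\tilde{\Zb}_Y=S\Zb_Y+\bb$. Matching the second coordinate of $\Psib=\tilde{\Psib}\circ T$, and noting $\tilde{\fb}_X(T_X(\zb))=\fb_X(\zb_X)=\xb$, gives $\fb_Y(\xb,\zb_Y)=\tilde{\fb}_Y(\xb,S\zb_Y+\bb)$ for all $\zb_Y$ and, since $\fb_X$ is onto $\RR^n$, all $\xb$, which is the claim. The only place the nondegeneracy assumption is essential is in eliminating $A_{YX}$; the rest is bookkeeping on compositions and block matrices.
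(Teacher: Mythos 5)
Your proposal is correct and follows essentially the same three-step route as the paper's own proof: cast the cause--effect model as the combined piecewise affine injective map $\Psib$ and invoke Theorem~3.2 of \cite{kivva2022identifiability}, use the causal ordering (the $\X$-coordinate's independence of $\zb_Y$) to force the block-triangular structure of $A$, and then use the block-diagonal per-component covariances together with one positive-definite $\Sigma_l^X$ of positive weight to eliminate the remaining off-diagonal block, yielding $A$ block diagonal and the stated relation for $\fb_Y$. Your argument is, if anything, slightly more explicit than the paper's at two points (the verification of injectivity and of the Gaussian-mixture hypothesis for Kivva's theorem, and the coordinate-matching derivation of $A_{XY}=0$ via invertibility of $\tilde{\fb}_X$), but these are elaborations of the same steps, not a different method.
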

\begin{proof}[Sketch of the proof (see Appendix~\ref{app_proofs} for the complete version).]
We will consider a latent variable model solution $\tilde{\Psib}:\Zb\to \W$ satisfying all assumptions and fitting the observational distribution $P(\X,\Y)$ perfectly. We study its relationship to the corresponding ground truth mapping ${\Psib}$ which generates the observations. This will then be linked to the cause-effect model solution $\tilde{\fb}_Y$ and its associated  ground truth model $\fb_Y$. The demonstration can be decomposed into three parts:

\textbf{(1)} The identifiability theory in \cite[Theorem 3.2]{kivva2022identifiability} implies that the latents $\Z$ can be recovered up to an affine transformation; more formally, the map $\tilde{\Psib}^{-1} \circ \Psib$ associating ground truth latents $\Z$ to recovered ones $\tilde{\Z}$ is an affine transformation with its linear map represented by a square matrix $A$. In addition, the constraint on the causal order enforces that $\Psib_X$ is not dependent on $\Z_Y$, which imposes a block triangular structure on $A$, encoding that the true $\Z_Y$ does not influence the recovered $\tilde{\Z}_X$. 

\textbf{(2)}  By Assumption~\ref{assum:nondeg} the mixture components' cross-covariance matrices between $\Zb_X$ and $\Zb_Y$ coordinates is zero for both the ground truth $\Z$ and recovered $\tilde{\Z}$. Identification up to affine transformation and permutation of these mixture components further constrains the relation between ground truth and recovered latents by forcing the matrix $A$ to be block diagonal.

\textbf{(3)} The final relation between ground truth and recovered cause-effect model is deduced from the shared structure of $\tilde{\Psib}$ and $\Psib$, and the block diagonality of $A$. 
\end{proof}

Note that the results by \cite{kivva2022identifiability} alone, allow the ambiguity of the identifiability results to be a general affine transformation without any restriction, which precludes the separation of the causal and the confounded variation in the observed $\Y$ and consequently the identification of the causal effect.

Provided the data generating process fits our assumptions, then our result guarantees that, in the infinite sample limit, we retrieve the ground truth causal mechanism up to some ambiguities. We now show that these remaining ambiguities do not affect our ability to estimate causal quantities such as the average treatment effect.

\textbf{Estimation of causal effects.} 
We now show that Theorem~\ref{thm:identif} implies that the average treatment effect is identifiabile, even though $P(L,H,Q)$ may remain unidentified. Given the graph in Figure~\ref{fig_models}b, we can see that $\Zb_Y$ satisfies the backdoor criterion \cite{Pearl_2009}, such that we can estimate the following interventional quantities by the adjustment formula: 
\begin{equation}
\EE \left[\Yb|\doo(\Xb=\xb) \right]=\int \yb\, p\left(\yb|\doo(\Xb=\xb)\right) d\yb =\int\!\!\int  \yb\, p\left(\yb|\Xb=\xb,\zb_Y\right) d\zb_Y d\yb \,.    
\end{equation}
That is, Theorem~\ref{thm:identif} provides the basis to deconfound the causal effect: 
\begin{restatable}{proposition}{identadjust}\label{prop:identadjust}
    Under the assumptions of Theorem~\ref{thm:identif}, assume additionally strict positivity of $p(\xb,\zb_Y)$ for almost all $\zb_Y$. Then, for any $\xb$ in the support of $P(\Xb)$, $\EE \left[\Yb|do(\Xb=\xb) \right]$ is identifiable from the observation of $P(\Xb,
    \Yb)$ with adjustment formula 
    \begin{equation}\label{eq_adjustment_main_text}
\EE \left[\Yb|do(\Xb=\xb) \right]=\EE_{\Zb_Y\sim P(\Zb_Y)}\left[\tilde{\fb}_Y(\xb,S \Zb_Y+\bb)\right]=\EE_{\tilde{\Zb}_Y\sim P(\tilde{\Zb}_Y)}\left[\tilde{\fb}_Y(\xb,\tilde{\Zb}_Y)\right]\,,
\end{equation}
where $P(\tilde{\Zb_Y})$ and $\tilde{\fb}_Y$ is the solution identified in Theorem~\ref{thm:identif}.
\end{restatable}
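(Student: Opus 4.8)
The plan is to reduce the interventional quantity to an expectation of the structural mechanism $\fb_Y$ under the marginal law of the exogenous noise $\Zb_Y$, and then to substitute the affine reparametrization supplied by \Cref{thm:identif}. Concretely, I would establish the chain of equalities in two independent moves: (i) an adjustment step that rewrites $\EE[\Yb\,|\,\doo(\Xb=\xb)]$ purely in terms of the ground-truth pair $(\fb_Y,\Zb_Y)$, and (ii) a substitution step that replaces $(\fb_Y,\Zb_Y)$ by the fitted solution $(\tilde{\fb}_Y,\tilde{\Zb}_Y)$ using the two relations $\fb_Y(\xb,\zb_Y)=\tilde{\fb}_Y(\xb,S\zb_Y+\bb)$ and $\tilde{\Zb}_Y=S\Zb_Y+\bb$ established in the theorem.

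For the first move, I would argue that in the SCM of \Cref{eq_model} the exogenous variable $\Zb_Y$ is not a descendant of $\Xb$, and that in the diagram of Figure~\ref{fig_models}b conditioning on $\Zb_Y$ blocks the single back-door path $\Xb\leftarrow\Zb_X\leftarrow H\to\Zb_Y\to\Yb$ induced by the confounder; hence $\Zb_Y$ is a valid adjustment set, yielding $\EE[\Yb\,|\,\doo(\Xb=\xb)]=\int \EE[\Yb\,|\,\Xb=\xb,\Zb_Y=\zb_Y]\,p(\zb_Y)\,d\zb_Y$. The key simplification is that, given $\Xb=\xb$ and $\Zb_Y=\zb_Y$, the structural equation makes $\Yb=\fb_Y(\xb,\zb_Y)$ deterministic, so the inner conditional expectation collapses to $\fb_Y(\xb,\zb_Y)$ wherever the conditioning event carries positive density. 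This is exactly where the added strict-positivity hypothesis enters: it guarantees $p(\xb,\zb_Y)>0$ for $P_{\Zb_Y}$-almost every $\zb_Y$, so the identity $\EE[\Yb\,|\,\Xb=\xb,\Zb_Y=\zb_Y]=\fb_Y(\xb,\zb_Y)$ holds almost everywhere and the integral ranges over the whole support of $P(\Zb_Y)$, giving $\EE[\Yb\,|\,\doo(\Xb=\xb)]=\EE_{\Zb_Y\sim P(\Zb_Y)}[\fb_Y(\xb,\Zb_Y)]$. Equivalently, one may read this directly off the mutilated SCM, in which $\doo(\Xb=\xb)$ fixes the cause while leaving the exogenous law $P(\Zb_Y)$ untouched.

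For the second move, I would substitute the theorem's first identity pointwise inside the expectation to obtain $\EE_{\Zb_Y}[\fb_Y(\xb,\Zb_Y)]=\EE_{\Zb_Y}[\tilde{\fb}_Y(\xb,S\Zb_Y+\bb)]$, which is the first claimed equality. The second equality is then a change-of-variables statement: since $S$ is invertible, the affine map $\zb_Y\mapsto S\zb_Y+\bb$ pushes $P(\Zb_Y)$ forward to exactly the law $P(\tilde{\Zb}_Y)$ of the recovered latent, by the theorem's second identity $\tilde{\Zb}_Y=S\Zb_Y+\bb$. Hence the expectation of $\tilde{\fb}_Y(\xb,\cdot)$ under the pushforward equals its expectation under $P(\tilde{\Zb}_Y)$, and chaining the two moves proves \Cref{eq_adjustment_main_text}, with every quantity on the right-hand side depending only on the fitted, observable solution.

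I do not expect a deep obstacle, since the heavy lifting is done by \Cref{thm:identif}; the one genuinely delicate point is the measure-theoretic justification in the first move. One must verify that the positivity assumption makes the conditional law $p(\yb\,|\,\xb,\zb_Y)$ well defined and degenerate at $\fb_Y(\xb,\zb_Y)$ for almost all $\zb_Y$ in the support, i.e. that there is no region of $\Zb_Y$-values carried by the marginal $P(\Zb_Y)$ yet incompatible with the observed $\Xb=\xb$; otherwise the adjustment integral would not cover the full support and the extrapolation implicit in $\EE_{\Zb_Y}[\fb_Y(\xb,\Zb_Y)]$ could fail. I would therefore state this overlap check explicitly before collapsing the inner expectation.
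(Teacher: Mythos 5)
Your proposal is correct and follows essentially the same route as the paper's proof: backdoor adjustment with $\Zb_Y$ as the adjustment variable (with the strict positivity of $p(\xb,\zb_Y)$ ensuring the adjustment integrand is well defined for almost all $\zb_Y$, since $\fb_Y$ is only pinned down on the support of the observational distribution), followed by pointwise substitution of the identity $\fb_Y(\xb,\zb_Y)=\tilde{\fb}_Y(\xb,S\zb_Y+\bb)$ from Theorem~\ref{thm:identif}, and finally the change of variables pushing $P(\Zb_Y)$ forward to $P(\tilde{\Zb}_Y)$ via $\tilde{\Zb}_Y=S\Zb_Y+\bb$. Your treatment is in fact somewhat more explicit than the paper's about the d-separation argument and the measure-theoretic overlap check, but the decomposition and key steps are identical.
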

See Appendix~\ref{app_proofs} for the proof.
Importantly, we cannot rely on $\Zb_1$ as an adjustment variable, as it violates positivity by construction of our model (it is deterministically related to $\Xb$), in line with the point made by \cite{damour19multi}. Positivity of $p(\xb,\zb_Y)$ is achieved under mild assumptions: it only requires the occurrence of one non-degenerate mixtures component of $\Zb$ in the observational setting. 
\begin{restatable}{proposition}{sufficident}\label{prop:sufficident}
    If there exists $(l,q)$ such that $P(L=l,Q=q)>0$ and both $\Sigma_l^X$ and $\Sigma_q^Y$ are positive definite, then the positivity assumption on $p(\xb,\zb_Y)$ in Proposition~\ref{prop:identadjust} is satisfied. 
\end{restatable}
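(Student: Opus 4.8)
The plan is to lower-bound the joint density $p(\xb,\zb_Y)$ by the contribution of the single mixture pair $(l,q)$ supplied by the hypothesis, and to show that this one contribution is already strictly positive. First I would marginalize over the latent labels. Since $\Zb_X$ is driven only by $L$ (so that $\Xb=\fb_X(\Zb_X)$) and $\Zb_Y$ only by $Q$, the variables $\Xb$ and $\Zb_Y$ are conditionally independent given $(L,Q)$, and marginalizing $H$ out of the joint gives
\[
p(\xb,\zb_Y)=\sum_{l',q'} P(L=l',Q=q')\, p(\xb\mid L=l')\, p(\zb_Y\mid Q=q'),
\]
where $p(\zb_Y\mid Q=q')$ is the Gaussian density $\mathcal N(\nub_{q'},\Sigma_{q'}^Y)$ and $p(\xb\mid L=l')$ is the density of the pushforward $\fb_X(\Zb_X)$ under $\Zb_X\mid L=l'\sim\mathcal N(\mub_{l'},\Sigma_{l'}^X)$. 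Every summand is nonnegative, so discarding all but the $(l,q)$ term yields
\[
p(\xb,\zb_Y)\ \ge\ P(L=l,Q=q)\, p(\xb\mid L=l)\, p(\zb_Y\mid Q=q),
\]
and it suffices to show the three factors on the right are each strictly positive.

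The first two factors are immediate. By hypothesis $P(L=l,Q=q)>0$. And since $\Sigma_q^Y$ is positive definite, the Gaussian $p(\zb_Y\mid Q=q)=\mathcal N(\zb_Y;\nub_q,\Sigma_q^Y)$ is strictly positive for \emph{every} $\zb_Y\in\RR^m$; in particular the ``almost all $\zb_Y$'' requirement of Proposition~\ref{prop:identadjust} will hold for all $\zb_Y$.

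The substantive step is the factor $p(\xb\mid L=l)$, and here I would use the pushforward of a full-support Gaussian through the invertible map $\fb_X$. Since $P(L=l)\ge P(L=l,Q=q)>0$ and $\Sigma_l^X$ is positive definite, the density of $\Zb_X$ is bounded below everywhere by $P(L=l)\,\mathcal N(\,\cdot\,;\mub_l,\Sigma_l^X)>0$, so $\Zb_X$ has full support $\RR^n$; because $\fb_X$ is CDPA and invertible, invariance of domain gives that $\fb_X(\RR^n)$ is open and $\mathrm{supp}(P(\Xb))=\overline{\fb_X(\RR^n)}$. For $\xb\in\fb_X(\RR^n)$ whose preimage $\zb_X=\fb_X^{-1}(\xb)$ lies in the interior of a linearity cell, the affine change-of-variables formula gives $p(\xb\mid L=l)=\mathcal N(\zb_X;\mub_l,\Sigma_l^X)\,\abs{\det A}^{-1}$, where $A$ is the linear part of $\fb_X$ on that cell, invertible by global injectivity of $\fb_X$; this is strictly positive. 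Combining the three positive factors gives $p(\xb,\zb_Y)>0$, which is exactly the positivity assumption invoked in Proposition~\ref{prop:identadjust}.

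The hard part will be making this last change of variables rigorous: unlike a smooth diffeomorphism, $\fb_X$ has a Jacobian that jumps across cell boundaries, so I must (i) verify that each full-dimensional linearity cell is mapped by an invertible affine map, as a consequence of the global injectivity in Assumption~\ref{assum:phiinject}, and (ii) argue that the set of $\xb$ whose preimage lands on a lower-dimensional cell boundary is Lebesgue-null, so the pointwise positivity holds on a full-measure subset of $\mathrm{supp}(P(\Xb))$ — which is all that is needed for the adjustment integral over $\zb_Y$ in Proposition~\ref{prop:identadjust}. The remaining bookkeeping (the conditional factorization and the two easy factors) is routine.
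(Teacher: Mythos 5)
Your proof is correct and takes essentially the same route as the paper's: both reduce strict positivity of $p(\xb,\zb_Y)$ to strict positivity of the latent Gaussian mixture at the preimage point under the invertible piecewise-affine map $\fb_X$, supplied by the non-degenerate component $(l,q)$ occurring with positive probability. The paper's version is terser (it simply asserts that the pushforward under an invertible, continuous, a.e.-differentiable map preserves positivity), while you additionally write out the mixture lower bound and the cell-wise change of variables; this is extra care within the same argument, not a different approach.
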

See Appendix~\ref{app_proofs} for the proof. 
Overall, the positive definite assumptions required on covariance matrices in Theorem~\ref{thm:identif} and Proposition~\ref{prop:sufficident} emphasize the importance of having independent (Gaussian) noise injected in both mechanism $\fb_X$ and $\fb_Y$ for identification.

\begin{figure}
    \centering % left bottom right top
    \includegraphics[width=.9\linewidth, trim=.2cm 11cm 1.8cm .8cm, clip]{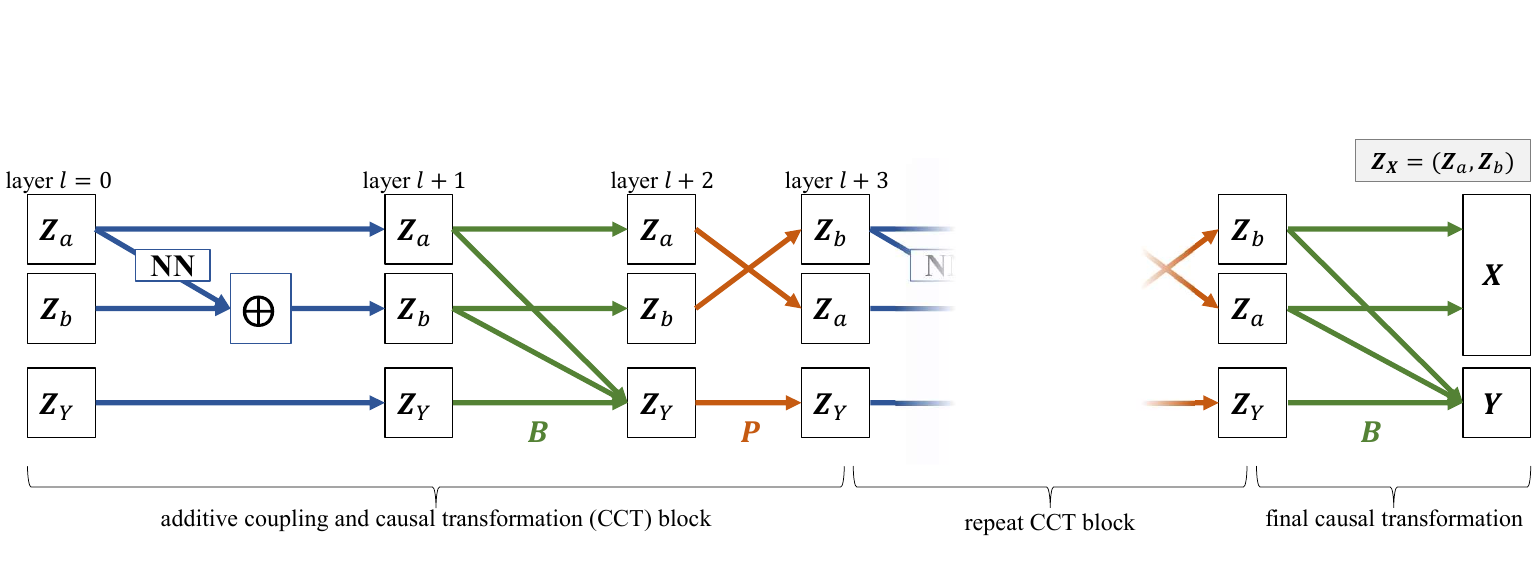}
    \caption{(Flow model implementation) The sequence of transformations that make up one block are composed of an additive coupling bijection from layer $l$ to $l+1$, see lines \texttt{5} and \texttt{6}, a causal transformation with a partly-diagonal structure ($\Z_Y$ node does not influence other nodes), see line \texttt{7}, from $l+1$ to $l+2$, and a permutation layer from $l+2$ to $l+3$. Line numbers refer to Algorithm \ref{alg_transformation_block}.}
    \label{fig_flow_structure}
\end{figure}

\section{Flow-based implementation}

We use flow-based models \citep{papamakarios2021normalizing} to estimate the discrete confounding model. Such models learn the (possibly complex) distribution of observed data by using successive transformations of a simpler base distribution. The trained model can then be used to sample from the data distribution. This generative aspect of flow-based models lends itself to our deconfounding application as it allows us to sample from $P(\tilde{\Zb}_Y)$, which is the latent variable that blocks the backdoor path and is used in Eq.~(\ref{eq_adjustment_main_text}). Unlike other generative models such as Variational Autoencoders, flow-based models allow optimization of the exact likelihood of the data, which seems to be critical for their use to estimate causal quantities precisely. Variational Autoencoders with a Gaussian mixture prior \cite{jiang2017variational}, as used in experimental section of \cite{kivva2022identifiability}, have proven not to perform as well as flow-based models for the application at hand.\footnote{We have implemented VAEs with appropriate architectural restrictions in experiments (not reported here) that did not exactly recover the true causal effects even in the simple $m=n=1$ linear case.}

\begin{wrapfigure}{R}{0.43\textwidth}
\begin{minipage}{0.41\textwidth}
\begin{algorithm}[H]
\caption{One $\DF$ transformation block, from layer $l$ to $l+3$}
\label{alg_transformation_block}
\begin{algorithmic}[1]
\STATE \textbf{Input:} $\z^{(l)}$
\STATE \textbf{Output:} $\z^{(l+3)}$
\STATE $\z_X^{(l)}, \z_Y^{(l)} \gets \text{split}(\z^{(l)})$
\STATE $\z_a^{(l)}, \z_b^{(l)} \gets \text{split}(\z_X^{(l)})$
\STATE $\lct^{(l)} \gets f_t(\z_a^{(l)})$
\STATE $\z_b^{(l+1)} \gets \z_b^{(l)} + \lct$ \\ \hfill \text{(additive coupling)}
\STATE $\z^{(l+2)} \gets \mathbf{B} \z^{(l+1)}$ \\\hfill \text{(causal transform: $\z_X \rightarrow \z_Y$)}
\STATE $\z_X^{(l+3)} \gets \mathbf{P} \z_X^{(l+2)}$
\STATE $\z_Y^{(l+3)} \gets \z_Y^{(l+2)}$
\end{algorithmic}
\end{algorithm}
\end{minipage}
\end{wrapfigure}

In flow-based models, observed variables $ \mathbf{w} :=(\mathbf{x}, \yb)\in \mathbb{R}^{m+n}$ are expressed as a transformation $T$ of $\mathbf{z}$, 
$\w = T(\z)$, sampled from a base distribution $p(\mathbf{z})$. Requiring $T$ to be differentiable and invertible licences the use of the change of variables formula to express the log-likelihood of the data as $\log p_\w(\w) = \log p_\z(\z) + \log|\det J_T(\z)|^{-1} $ or, using that $\z = T^{-1}(\w)$ and swapping inverse and determinant,
\begin{equation}
    \log p_\w(\w) = \log p_\z(T^{-1}(\w)) + \log |\det J_{T^{-1}}(\w)|.
\end{equation} The log-likelihood of the data can thus be expressed by evaluating the base distribution at the transformed $\w$ and accounting for the resulting change in volume by adding the log determinant of the inverse Jacobian of that transformation. To represent the Gaussian mixture structure of the latent variables in our generative model, see Eq.~\eqref{eq_GMM_assumption}, we use a Gaussian mixture model as a base distribution.\footnote{A GMM base distibution in flow-based models has previously been used by e.g. \cite{stimper2022resampling}.} The GMM is characterized by mixture weights ($\pi_k$), means ($\mub_k$) and covariances ($\Sigma_k$):
\begin{equation}\label{eq_GMM}
\textstyle
	p(\mathbf{z}) = \sum_{k=1}^{K} \pi_k \mathcal{N}(\mathbf{z}; \mub_k, \Sigma_k),
\end{equation}
where $K$ is the number of mixture components, $\pi_k$ are the mixture weights, and $\mathcal{N}(\mathbf{z}; \mu_k, \Sigma_k)$ with diagonal covariance matrix denotes the Gaussian distribution for component $k$.

In our causal inference setting, only transformations that respect the causal order of observed variables $\w$ are admissible. To ensure that information flows only in the causal direction from $\x$ to $\yb$, we need to restrict the transformations to be lower-triangular. We first introduce a simple one-layer, linear flow, which allows us to introduce the required restriction. In the subsequent section, we introduce a multi-layered model with additive coupling bijections and triangular causal transformations that can express more complex distributions.

\subsection{One-layer linear flow}\label{sec_one_layer_flow}

In the simplest proof-of-concept model, where we assume we observe 2D Gaussian mixtures in $\w$ resulting from linear mechanisms, the transformation $T$ is then a block lower triangular matrix,
\begin{equation}\label{eq_linearA}
\textstyle
    {A} = \begin{pmatrix}
a_{11} & 0 \\
a_{21} & a_{22}
\end{pmatrix}\,.
\end{equation}
The log-likelihood then reduces to
$    \log p_\w(\w) = \log p_\z(\mathbf{A}^{-1}\w) +\sum_{i=1}^{2} \log |a_{ii}|$. %Thus, we optimize parameters $\nu :=\{a_{11}, a_{21}, a_{22}, \mathbf{\pi}, \mathbf{\mu}, \mathbf{\sigma}\}$ by maximizing the log-likelihood.
We apply this simple model to simulated data with a one-dimensional cause below.

\begin{wrapfigure}{R}{0.45\textwidth}
  \begin{center}
    \includegraphics[width=0.45\textwidth]{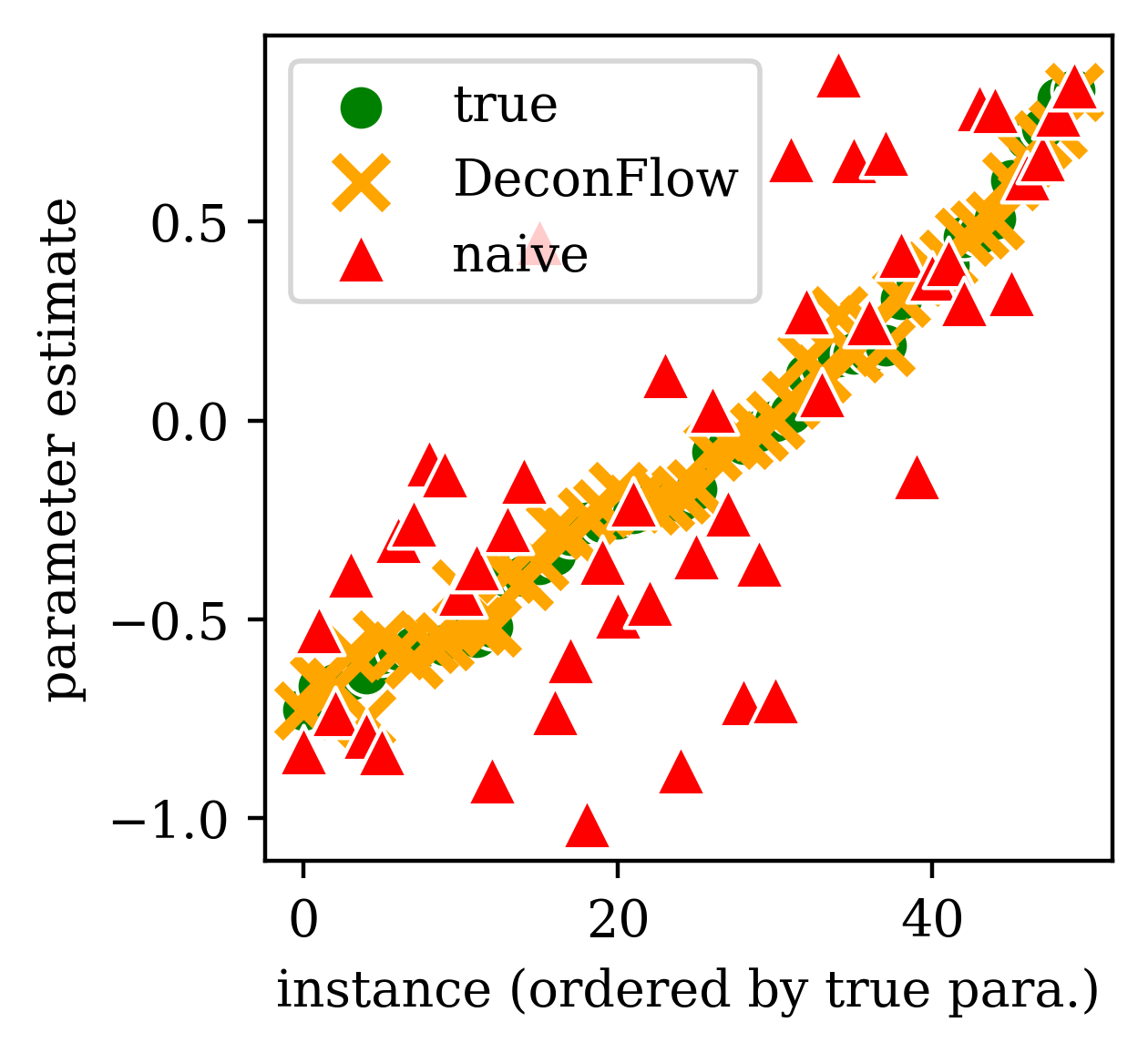}
  \end{center}
  \caption{With a one-dimensional cause and one-dimensional confounder, $m=n=1$, performance can be evaluated by comparing the $\DF$-adjusted slope parameter estimates (orange crosses) to the ground truth (green circles). In addition, we report the naive estimates that are obtained without addressing confounding (red triangles).}\label{fig_res_linear1d}
\end{wrapfigure}

\subsection{Additive coupling bijection}
To model more complicated distributions of $\w$, we propose a flow-based model where one transformation block is composed of an additive coupling layer \citep{dinh2014nice} and a causal tranformation akin to a masked autoregressive layer \citep{papamakarios2017masked}. Specifically, the transformations in one block are described in Algorithm \ref{alg_transformation_block}. Superscript $(l)$ denotes layer index, line \texttt{3} splits $\z_X$ into the first $n/2$ (rounded up if necessary) dimensions (subscript~$a$) and the remaining dimensions (subscript~$b$). The function $f_t$ in line \texttt{5} is parameterized by a neural network with ReLU activation function, the transformation matrix in line \texttt{7} has a partly-diagonal form,
\begin{equation*}
\mathbf{B} = \begin{bmatrix}
\text{diag}(\mathbf{a}) & \mathbf{0} \\
\mathbf{b} & b_{d,d}
\end{bmatrix}    
\end{equation*}
with $\mathbf{a} = \begin{bmatrix}
a_{1,1} & \cdots & a_{d-1,d-1}
\end{bmatrix}
$ and $\mathbf{b} = \begin{bmatrix}
a_{d,1} &  \cdots & a_{d,d-1}
\end{bmatrix}$, and $\mathbf{P}$ (only acting on $\z_X$, not $\z_Y$) in line \texttt{8} is a permutation matrix. By restricting $\mathbf{B}$ in this way and permuting only $\z_X$, we ensure that $\x$ influences $\y$ (but not vice versa), which reflects the assumed causal structure. Note that lines \texttt{5}~and~\texttt{6} differ from widely-used coupling bijections (which would additionally multiply $\z_b^{(l)}$ by a factor that is learned by $f_t$, as proposed in \cite{dinh16realNVP}) to ensure that the transformation is piecewise affine, which we require for identifiability. In practice, $N_B$ of such blocks are concatenated as depicted in Figure~\ref{fig_flow_structure}.

We can write the log-likelihood of $\w$ given these transformations as
\begin{equation}\label{eq_loglik_multilayer}
\textstyle
	\log p_\w(\w) = \log p_\z (\z^{(0)}) + \sum_{l=1}^{L}\sum_{i=1}^{d} \log |a_{ii}^{(l)}| 
\end{equation}
where $\z^{(0)} = \overline{T}\w$ with $\overline{T} = T_{(l=0)} \circ \hdots \circ T_{(l=L)}$ denoting the composition of the transformations described above (similarly for its inverse, $\overline{T}^{-1}$) and $p_\z$ being a Gaussian mixture model with diagonal covariances, as in Eq. \eqref{eq_GMM}. The transformation in line \texttt{6} is volume-preserving and has a unit Jacobian determinant. Therefore, its logarithm is equal to zero and vanishes in the log likelihood. Since the Jacobian of $\mathbf{B}$ is lower-triangular, its determinant is the product of the diagonal elements. We then optimize the log-likelihood in Eq. \eqref{eq_loglik_multilayer} using backpropagation.

\subsection{Closing the backdoor path through sampling}\label{sec_closing_backdoor}

Given our model structure, conditioning on $\Z_Y$ blocks the backdoor path between $\X$ and $\Y$. This motivates the following strategy to estimate $\mathbb{E}[Y|\doo(\X=\x)]$ from observed data. We transform the observed samples of $\w$ to $\z$ by inverting $\Psib$ using our trained model. We then sample $N_p$ times from the empirical distribution of $\tilde{\Z}_Y$ to compute 
\begin{equation}
\textstyle
    \overline{\w} = ({\x}, \overline{\y}) = \frac{1}{N_p}\sum_{\tilde{\z}_Y\sim P(\tilde{\Zb}_Y)}^{N_p} \overline{T}(\z_X, \tilde{\z}_Y)\,,
\end{equation}
where $\overline{\x}\!=\!\x$ because $\fb_X$ is  invertible. This yields the empirical counterpart to Eq. \eqref{eq_adjustment_main_text}, %\michel{perhaps emphasize you have defined some theta relevant to applications below, like give it a name?}
\begin{equation}\label{eq_estimated_do}
	\mathbb{E}[Y|\doo(\X=\x)] \approx \overline{\y} =: \hat{\theta}(\x).
\end{equation}

\section{Simulation Study}\label{sec_simulation}
\subsection{Data Generation}\label{sec_data_gen}
Given the generative model, we simulate data from a Generalized Additive Model (GAM, \cite{tibsh}) as follows. First, we randomly generate parameters of the joint distribution $P(L, Q)$ such that there is a correlation between $L$ and $Q$. Second, we generate $\Z_X \sim \mathcal{N}(\boldsymbol{\mu}_{h_X}, \Sigma_{h_X})$ and $Z_Y \sim \mathcal{N}(\mu_{h_Y}, \sigma^2_{h_X})$ 
where $\boldsymbol{\mu}_{h_X} \sim \mathcal{U}(1, 4)$ and $\mu_{h_Y} \sim \mathcal{U}(0, 1)$,  $\Sigma_{h_X} = \mathbf{I} \times 0.01$ and $\sigma^2_{h_X}=0.01$. We focus on the case with $m=1$, a scalar effect, in the simulation study.

\begin{wrapfigure}{L}{0.35\textwidth}
  \begin{center}
    \includegraphics[width=0.35\textwidth]{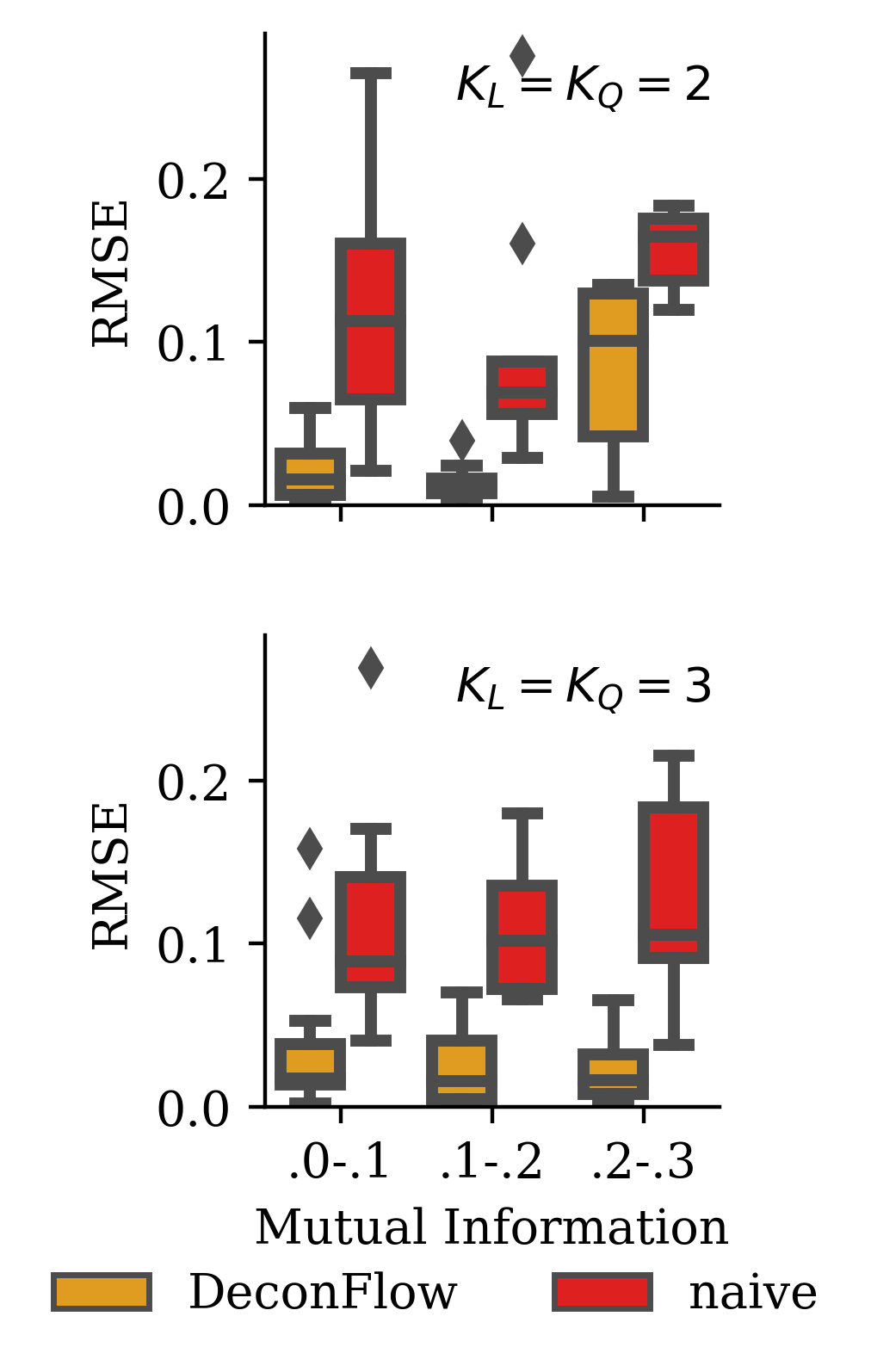}
  \end{center}
  \caption{See Section~\ref{sec_results_synthetic} for description.}\label{fig_nonlinear_5d}
\end{wrapfigure}

To generate $\X$ and $Y$, we then parameterize the influence of $\Z_X$ on $X$ and $Y$ as well as the influence of $Z_Y$ on $Y$ with random CDPA functions,
\begin{equation}\label{eq:additivesim}
    \X \!=\! \tau_1(\Z_X) \,, \,\mbox{ and } \, 
    Y\!=\! \beta \tau_2(\Z_X) + \tau_3(Z_Y) + \varepsilon\,,
\end{equation}
where $\beta$ is the true causal effect, and $\varepsilon\sim\mathcal{N}(0,0.01)$. Inspired by \cite{he2016deep}, the functions $\tau_1$, $\tau_2$, and $\tau_3$ are randomly initialized residual-flow type neural networks designed to generate an invertible piecewise affine transformation of data. The architecture consists of an initial linear layer, followed by a series of five ResNet blocks, and concludes with a final linear layer to produce the transformed output. Each ResNet block contains two linear layers with LeakyReLU activations and a skip connection, which adds the input of the block to its output. Note that the model class described in Eq.~\ref{eq:additivesim} is not covering the whole set of models considered in the theory. Notably, the effects of $\Z_X$ and $Z_Y$ on $Y$ are not required to be additive for our theoretical results to hold.

\textbf{Evaluation metric in linear case with $n=m=1$.} When $\tau_1$, $\tau_2$, and $\tau_3$ are identity mappings, we evaluate the ability of our method to deconfound by comparing the estimated slope parameter with the true causal effect $\beta$. In the linear case, the estimated parameter can be read off the estimate of the transformation matrix $A$ in \eqref{eq_linearA}: $\hat{\beta} = \frac{a_{21}}{a_{11}}$.

\textbf{Evaluation metric in the nonlinear case.} When $\tau_1$, $\tau_2$, and $\tau_3$ are random injective mappings, we evaluate the ground truth $\theta^*(\x):=\mathbb{E}[Y|do(\X=\x)]$ using Eq. \eqref{eq_adjustment_main_text} but for the ground truth model. We compare $\theta^*(\x)$ with the estimate defined in Eq. \eqref{eq_estimated_do}: \begin{equation}\label{eq_rmse}
\textstyle
	\text{RMSE}=\sqrt{
	\EE_{\x\sim P(\X)} \left[\left(\hat{\theta}(\x)-\theta^*(\x)\right)^2\right]}
\end{equation}

For comparison, we report a baseline RMSE that is obtained when the conditional density is erroneously used as a causal effect estimate:
\begin{equation}\label{eq_rmse_naive}
\textstyle
    \text{RMSE}_\text{naive}=\sqrt{
	\EE_{\x\sim P(\X)} \left[\left( \EE(Y|\x) -\theta^*(\x)\right)^2\right]}\;.
\end{equation}

\subsection{Results}\label{sec_results_synthetic}

\textbf{Linear one-layer, identity mapping.} First we generate 10,000 samples for the simple setting when $n=m=1$, and $\tau_1$, $\tau_2$, $\tau_3$ all being identity mappings, with $K_L=K_Q=2$, and apply the simple one-layer linear flow described in Section~\ref{sec_one_layer_flow}. In this case, the observed data \textit{is} a Gaussian mixture. Therefore, we have a setting in which the estimation procedure focuses solely on disentangling causal from confounded variation without additionally learning the mapping from observed data to a Gaussian mixture model. This setting serves as proof-of-concept of the deconfounding strategy. Results are shown in Figure~\ref{fig_res_linear1d}. It can be seen that the naive parameter estimates that are obtained by regressing observed $Y$ on observed $X$ are biased in arbitrary directions. Using $\DF$, we recover estimates of $\mathbb{E}[Y|\doo(X=x)]$, which we regress on $x$ to compute the deconfounded parameter estimates that almost perfectly match the ground truth.\footnote{Experiments are run on AWS Deep Learning AMI, with 36 vCPUs, runtime about 3 hours.}

\textbf{Nonlinear, invertible piecewise affine transformations.} Next we generate data with $n=5$, $m=1$ and $\tau_1$, $\tau_2$, $\tau_3$ random invertible piecewise affine functions (as described in Section~\ref{sec_data_gen}) and $K_L= K_Q=k$ for $k\in\{2, 3\}$, 10,000 observations. Figure~\ref{fig_nonlinear_5d} shows RMSE, see Eq. \eqref{eq_rmse}, and $\text{RMSE}_\text{naive}$, see Eq. \eqref{eq_rmse_naive}. The \textit{x}-axis shows mutual information between discrete variables $L$ and $Q$ as a measure for the strength of confounding. $\DF$ decreases the error incurred when estimating $\mathbb{E}[Y|\doo(\X=\x)]$ without observing the discrete confounder substantially. What we achieve here is the estimation of a nonlinear causal quantity, $\mathbb{E}[Y|\doo(\X=\x)]$, without observing the latent quantity that induces the discrepancy between it and $\mathbb{E}[Y|\x]$.\footnote{Experiments are run on AWS Deep Learning AMI, with 96 vCPUs, runtime about 10 hours.}

\section{Application}\label{sec_twins}
We use data on twin births in the USA collected around 1990, which has been used before by \cite{louizos2017causal} to illustrate causal inference methods. It contains measures of birth weight of newborn twins with about two dozen additional control covariates, such as parental education, number of prenatal visits, etc. for about 32,000 twins (and their parents). See Appendix \ref{app_twins} for a complete list of variables. The dataset lends itself to our setting because most of the variables are discrete and can serve as confounders. At the same time, some ordinal variables are also recorded. We choose as causes those ordinal variables so that we can approximate them with continuous variables by adding uniformly distributed noise. We do this because our model requires continuous cause variables and discrete confounding variables.

\begin{wrapfigure}{R}{0.28\textwidth}
  \begin{center}
    \includegraphics[width=0.28\textwidth]{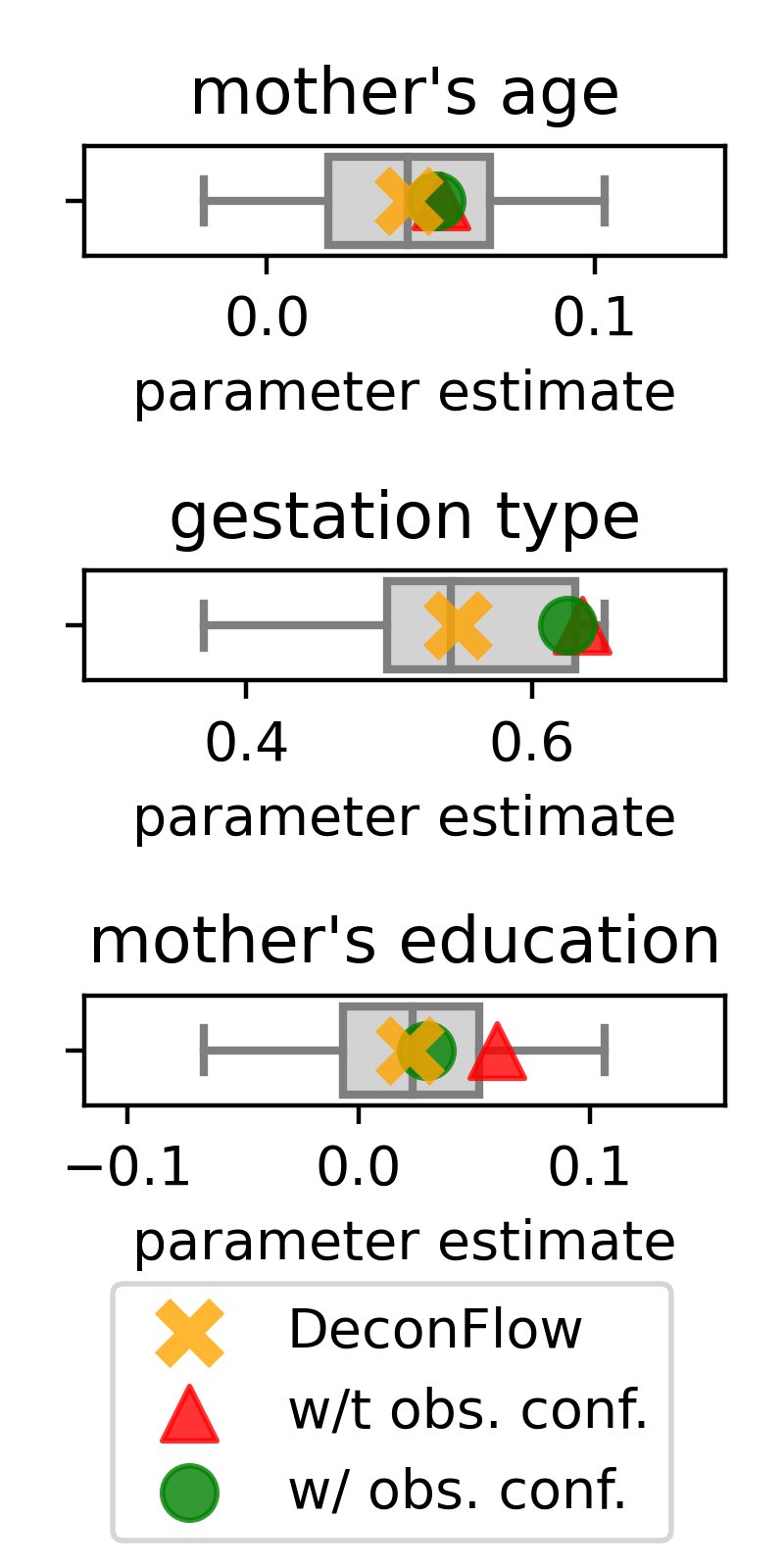}
  \end{center}
    \caption{See Section~\ref{sec_twins} for description.}
    \label{fig_twins_results}
\end{wrapfigure}

From the set of covariates $\{X_1, \dots, X_K\}$ we select the three ordinal variables that are directly related to the mother as observed causes: \emph{mother's age, gestation type}, and \emph{mother's education}, and denote them by $\X = \{X_1, X_2, X_3\}$. We use \emph{birth weight of the first-born twin} as target variable, $Y$, and treat all remaining covariates as confounders, denoted by $\V = \{X_4, \dots, X_K\}$. This allows us to estimate ``true'' causal effects when we treat the confounders as observed, and test whether $\DF$ can recover these given only the data about $\X$ and $Y$.

Predicting $Y$ using least-squares regression, we estimate the parameter vector for $\X$ once when controlling for $\V$ (denoted $\mathbf{\beta}^*$) and once when not  controlling for $\V$ (denoted $\hat{\mathbf{\beta}}$). We run our deconfounding approach as described in Section~\ref{sec_closing_backdoor} using only $\{\X, Y\}$, which yields our estimate of $\hat{\theta}(\x) = \mathbb{E}[Y|\doo(\X=\x)]$. We then regress $\hat{\theta}(\x)$ on $\X$ to estimate our debiased parameter vector, $\tilde{\mathbf{\beta}}$. We can evaluate whether our method can account for the confounders $\V$ (that are unobserved from its perspective) by comparing $\mathbf{\beta}^*$ with $\hat{\mathbf{\beta}}$ and~$\tilde{\mathbf{\beta}}$.

We run $\DF$ for multiple seeds and hyperparameters. In Figure~\ref{fig_twins_results}, for each of the three cause variables (\emph{mother's age, gestation type}, and \emph{mother's education}), we report \textit{i}) the slope parameter of that cause variable in a regression of $Y$ on the three causes (red triangle), \textit{ii}) the slope parameter of that cause variable in a regression of $Y$ on the three causes and the observed confounders (green dot), \textit{iii}) the average slope parameter of that cause in a regression of the $\DF$-adjusted target variable $\tilde{Y}$ on the three causes for 32 runs of $\DF$ (orange cross), as well as a boxplot of the underlying distribution of this parameter. For causes \emph{mother's age} and \emph{mother's education}, we observe that our method yields mean parameter estimates that are closer to $\mathbf{\beta}^*$ than $\hat{\mathbf{\beta}}$. For \emph{gestation type}, we find $\tilde{\mathbf{\beta}}$ to be lower than both $\mathbf{\beta}^*$ and $\hat{\mathbf{\beta}}$.

While we consider similar $\mathbf{\beta}^*$ and $\tilde{\mathbf{\beta}}$ as evidence that our method accounts for $\V$ without observing it, we stress that $\mathbf{\beta}^*$ might in fact differ from the true parameter vector because of residual confounding that is not captured by $\V$. That is, a discrepancy between $\mathbf{\beta}^*$ and $\tilde{\mathbf{\beta}}$ might indicate the existence of additional confounders unmeasured in the dataset, rather than a shortcoming of our method. For instance, the discrepancey between $\tilde{\mathbf{\beta}}$ and $\mathbf{\beta}^*$ for \emph{gestation type} could be due to additional unmeasured confounders.

\section{Discussion}\label{sec_discussion}
While there is a large literature on using measured confounders to deconfound causal effect estimates (see e.g. \cite{chernozhukov2018double}), or to gauge the sensitivity to unmeasured confounders by benchmarking against \textit{measured} confounders in treatment effect estimation \citep{cinelli20sensitivity} or policy learning \citep{kallus2021minimax, marmarelis2024policy}, work on accounting for unmeasured confounders without such benchmarks is scarce. In the following we provide a brief overview of related work that addresses unmeasured confounding without access to observed confounders.

One way to tackle unmeasured confounding is to make assumptions on the independence of causal mechanisms (ICM) \cite{peters2017elements, janzing2010causal}. For instance, \cite{JS, JSb} formalize ICM in multivariate linear models to estimate a degree of confounding. ICM can also be seen as motivating additive noise models as used in \cite{janzing2012identifying}, which is similar to our approach in the sense that a latent confounder is learned from observed variables. However, this method does not allow for both a causal \textit{and} a confounding effect between the two variables.

Even without implicit or explicit motivation through ICM, restricing model classes can help to address unmeasured confounding. For instance, assuming linear relations and non-Gaussian variables yields identifiability of a number of causal properties \cite{shimizu2006linear}. In this model class, \cite{hoyer2008estimation} show how independent component analysis (ICA) with an overcomplete basis (recovering more source variables than there are observed signals), can help to theoretically identify, up to some remaining ambiguity, the latent confounder and causal effect. However, practical algorithms that reliably estimate an overcomplete basis are lacking and require additional assumptions (such as sparsity of the mixing matrix). Methods for (nonlinear) ICA with equal number of sources and signals include e.g. \cite{khemakhem20VAE_ICA, hyvarinen2017nonlinear} but these require observed auxiliary information (such as environment variables) or assumptions like ICM \cite{gresele2021independent}. None of these methods can address unmeasured confounding in a principled and practical way, which is the goal of our proposed method.

\textbf{Limitations.} As all causal inference techniques, the proposed methodology relies on assumptions that, if not satisfied, can cast doubt on causal effect estimates that are produced using the method. While the discrete nature of the confounding we are considering has applications in a variety of domains (e.g., controlling for batch effects in high-throughput sequencing data \cite{leek2010tackling}), it is a substantial assumption that needs to be taken into account by practitioners. Furthermore, we restrict the latent variables to follow a Gaussian mixture model and the function mapping from latent to observed variables to be piecewise affine and injective. While this is a very flexible model class, how our causal effect identification result generalizes to the case where the ground truth model does not strictly belong to this class remains an open question.

\section{Conclusion}

We propose a method to address unmeasured discrete confounding in nonlinear cause-effect models. By mapping a confounded causal model to an equivalent latent variable model, we can leverage identifiability results in the literature on such models. We demonstrate that, under specific assumptions, it is possible to identify causal effects despite the presence of unmeasured confounders. We introduce a flow-based algorithm that can correct for this type of unmeasured confounding. The empirical results on both synthetic and real-world data provide evidence of the effectiveness of our approach.

As such, this work is an effort at building a bridge between the literature on causal inference that uses constraints on function classes and deep latent variable models. The usefulness of deep latent variable models have successfully been shown in a variety of applications and has spurned an interested in analyzing their identifiability properties, whose connections to causal inference problems we explore here.

Future work may investigate how the proposed strategy can be extended to more complex causal graphs, other model classes, and other estimable causal quantities such as counterfactuals.

\bibliographystyle{plain}

%\ifnatbib
%\bibliographystyle{plainnat}
%\else
%\bibliographystyle{plain}
%\fi
\bibliography{NeurIPS2024}

\appendix
\newpage

\section*{Appendices}\label{app}

\section{Proof of main text results}\label{app_proofs}
\identif*
\begin{proof}
    \textbf{Step 1}: \textit{Affine identifiability.}
    
    The above model can be rewritten as a piecewise affine injective mapping 
    \begin{align}
    \Psib:&\,\quad\Zcal\to\Xcal \times \Ycal\,,\\
    &\begin{bmatrix}
    \zb_X\\
    \zb_Y
    \end{bmatrix}
     \mapsto 
     \begin{bmatrix}
    \fb_X( \zb_X)\\
     \fb_Y(\fb_X(\zb_X),\zb_Y)
    \end{bmatrix}\,.
    \end{align}
    Therefore we get affine identifiability from \cite[Theorem 3.2]{kivva2022identifiability}.
    
    \textbf{Step 2}: \textit{Form restriction on the affine transformation due to partial observation.}\footnote{Restriction on the ambiguity that results because we only recover $g, \mathcal{Z}$ up to affine transformation. The point here is that it is a very special ambiguity, namely one where $\mathbf{A}$ is diagonal.}
    Assume another solution $\Tilde{f}$, it can also be rewritten as an injective mapping
 \begin{align}
    \tilde{\Psib}:&\,\quad\Zcal\to\Xcal \times \Ycal\,,\\
    &\begin{bmatrix}
    \zb_X\\
    \zb_Y
    \end{bmatrix}
     \mapsto 
     \begin{bmatrix}
     \Tilde{\fb_X}(\zb_X)\\
     \tilde{\fb_Y}(\fb_X(\zb_X),\zb_Y)
    \end{bmatrix}\,.
    \end{align}
    By affine identifiability, $\tilde{\Psib}^{-1}\circ \Psib$ is an affine map $\zb\mapsto A\zb+\bb$. From the above we deduce that\footnote{This is because $\Psib$ is lower triangular, therefore $\tilde{\Psib}$ is lower triangular, therefore $\tilde{\Psib}^{-1}$ is lower triangular, and therefore $\tilde{\Psib}^{-1}\circ \Psib$ is lower triangular.}
     \begin{align}
   A&=
     \begin{bmatrix}
     T %I_n
     & 0\\
     U & S
    \end{bmatrix}\,.
    \end{align}
    with $U$ an $m\times n$ row vector, $T$ an invertible matrix and $S$ a non-vanishing scalar (due to invertibility of both functions). 
    
    \textbf{Step 3}: \textit{Further form restriction due to non-degeneracy of intra-mixture component covariances. }
    Let us consider the ground truth distribution of $\Z$: due to Assumption.~\ref{assum:nondeg} it is a Gaussian mixture, whose mixture components are indexed by $\{(l,q)\}_{l=1..K_L;q=1..K_Q}$ and whose associated covariances are of block diagonal of the form
    \[
    \Sigma_{l,q} = \begin{bmatrix}
        \Sigma_l^X & \boldsymbol{0}\\
        \boldsymbol{0} & \Sigma_q^Y
    \end{bmatrix} \,.
    \]
%    denote $\Sigma_k$ the covariance associated to each mixture component of $\Z$. %By weak non-degeneracy
 %   $\sum_{l=1}^L \Sigma_l$ 
  %  is positive definite. As a consequence, $\sum_{l=1}^L (\Sigma_l)_{11}$ is also positive definite. 
    Moreover, this is the same for the retrieved latent $\tilde{\Z}$, up a permutation of indices $(l,q)\mapsto \sigma(l,q)$ and the affine transformation introduced above (e.g. using Theorem C.2 in \cite{kivva2022identifiability}, stating that the mixture components are identified up to a permutation and affine transformation).  
      As a consequence we get, for any index $(l,q)$, that the corresponding mixture component covariance $\widetilde{\Sigma}_{\sigma(l,q)}$ correspond $\Sigma_{l,q}$ after linear transformation of the Gaussian distribution by matrix $A$, i.e.  
     \begin{align}
   \widetilde{\Sigma}_{\sigma(l,q)}= A \Sigma_{l,q} A^\top &=
     \begin{bmatrix}
     T %I_n
     & 0\\
     U & S
    \end{bmatrix} \begin{bmatrix}
     \Sigma_l^X %I_n
     & 0\\
     0 & \Sigma_q^Y
    \end{bmatrix}
    \begin{bmatrix}
     T^\top %I_n
     & U^\top\\
     0 & S^\top
    \end{bmatrix}\\
    &=
     \begin{bmatrix}
     T %I_n
     & 0\\
     U & S
    \end{bmatrix} \begin{bmatrix}
     \Sigma_l^XT^\top  %I_n
     & \Sigma_l^{X}U^\top\\
     0 & \Sigma_q^{Y} S^\top
    \end{bmatrix}\\
    &= \begin{bmatrix}
     T \Sigma_l^{X}T^\top  %I_n
     & T\Sigma_l^{X}U^\top\\
     U \Sigma_l^{X}T^\top   & S \Sigma_q^{Y} S^\top + U\Sigma_l^{X}U^\top
    \end{bmatrix}
    \,.
    \end{align}
      where the off diagonal blocks must again be equal to zero by Assumption~\ref{assum:nondeg} applied to the covariance of the mixture component of the obtained solution $\widetilde{\Sigma}_{\sigma(l,q)}$. %because $\Z_X\indep \Z_Y|H$. 
Exploiting this assumption further, let us choose $l$ such that $\Sigma_l^X$ is positive definite. In that case, we can write for the off-diagonal block
      \begin{align}
%          (\sum_{l'=1}^{L} \tilde{\Sigma}_{l'})_{21}&= 0 \\
          U \Sigma_l^{X} T^\top &= 0\\
          U \Sigma_l^{X}  &= 0 \; \text{because $T^\top$ is invertible}\\
          U &= 0 \; \text{because $\Sigma_l^{X}$ is positive definite and therefore invertible.}
      \end{align}

Consequently,

      \begin{align}
   A&=
     \begin{bmatrix}
     T & 0\\
     0 & S
    \end{bmatrix}\,,
    \end{align}
    which entails identifiability up to scalar affine reparametrization of $Z_2$ and affine invertible transformation of $Z_1$.

    More precisely, for all $\zb_1,\zb_2$, the composition of $\tilde{\Psib}^{-1}$ with $\Psib$ is ambiguous up to a diagonal affine transformation: 
    \[
\begin{bmatrix}
        \tilde{\zb}_X\\
        \tilde{\zb}_Y
    \end{bmatrix}=    \tilde{\Psib}^{-1}\circ \Psib(\zb_X,\zb_Y)=\begin{bmatrix}
        T\zb_X+\bb_1\\
        S \zb_Y+\bb_2
    \end{bmatrix}
    \]
    Leading to 
    \[
    \Psib(\zb_X,\zb_Y)=\tilde{\Psib}(
        T\zb_X+\bb_X,
        S z_Y+\bb_Y)
    \]
    For the $\Xb$ component this gives
    \[
\fb_X(\zb_X) = \tilde{\fb}_X(T\zb_X+\bb_X) 
    \]
    such that
    \[
\fb_X^{-1}(\xb) = T^{-1} \left(\tilde{\fb}_X^{-1}(\xb)-\bb_X\right) 
    \]
    because $(f \circ g)^{-1} = g^{-1} \circ f^{-1} $. 
    And for the $\Yb$ component this gives,
    \[
    \fb_Y(\fb_X(\zb_X),\zb_Y) = \tilde{\fb}_Y(\tilde{\fb}_X(T\zb_X+\bb_X), S \zb_Y +\bb_Y)
    \]
    Finally we get the following relation for the causal mechanism
    \[
    \fb_Y(\xb,\zb_Y)=\tilde{\fb}_Y(\fb_X(\zb_X),S \zb_Y+\bb_Y)=\tilde{\fb}_Y(\xb,S \zb_Y+\bb_Y)
    \]    
\end{proof}

\identadjust*
\begin{proof}
Consider a given $
\xb$ in the support of $p(\Xb)$, the above backdoor adjustment require $p(\yb|\Xb=\xb,\zb_Y)$ to be well defined for almost any $\zb_Y$. 
Given our generative model of Section~\ref{sec_data_gen}, this amounts to having $\fb$ unambiguously defined for almost any $\zb_Y$. As $\fb_Y$ is only unambiguously identified on the support of the observational distribution $p(\xb,\zb_Y)$, it is necessary and sufficient to have strict positivity of $p(\xb,\zb_Y)$ for almost all $\zb_Y$. The adjustment formula using $\Zb_Y$ is given by
     \begin{equation*}
\EE \left[\Yb|do(\Xb=\xb) \right]=\EE_{\Zb_2\sim P(\Zb_Y)}\left[\fb(\xb,\Zb_Y)\right]\end{equation*}
Using Theorem~\ref{thm:identif} we can rewrite the expression of function $\fb$ such that
\[
\EE \left[\Yb|do(\Xb=\xb) \right]=\EE_{\Zb_Y\sim P(\Zb_Y)}\left[\tilde{\fb}_Y(\xb,S \Zb_Y+\bb)\right]\,.
\]
Moreover, we can replace the (unknown) latent variable distribution $P(\Z_2)$ with the estimated latent variable distribution $P(\tilde{\Z}_2)$ to obtain the result
\begin{equation}\label{eq_sampling_tildeZ2}
    \EE \left[\Yb|do(\Xb=\xb) \right]=\EE_{\tilde{\Zb}_Y\sim P(\tilde{\Zb}_Y)}\left[\tilde{\fb}_Y(\xb,\tilde{\Zb}_Y)\right]\,.
\end{equation}
\end{proof}

\sufficident*
\begin{proof}
    As $p(\xb,\zb_Y)$ is the pushforward of $p(\zb_X,\zb_Y)$ by an invertible, continuous, differentiable almost everywhere, function $\Psib$ defined in the proof of Theorem~\ref{thm:identif}. Therefore,  $p(\xb,\zb_Y)$ is strictly positive if and only if $p(\zb_X=\fb_X^{-1}(\xb),\zb_Y)$ is strictly positive. Since $p(\zb_X,\zb_Y)$ is a Gaussian mixture, it is sufficient to have at least one non-degenearate mixture component occurring with non-zero probability strict positivity (see Assumption~\ref{assum:nondeg})..
\end{proof}

\section{Structural causal models}\label{app_SCM}
Causal dependencies between variables can be described using \textit{Structural Causal Models} (SCM)~\citep{Pearl_2009}.
\begin{definition}[SCM]\label{def:SCM}
    An $n$-variable SCM is a triplet $\Mcal=(\mathcal{G},\mathbb{S},P_\Ub)$ consisting of:
    \begin{itemize}
        \item a directed acyclic graph $\mathcal{G}$ with $n$ vertices,
        \item a set $\mathbb{S}=\{\Vb_j \coloneqq \fb_j(\textbf{Pa}_j,\Z_j), j=1,\dots,n\}$ of structural equations, 
        where $\textbf{Pa}_j$ are the variables indexed by the set of parents of vertex $j$ in $\mathcal{G}$,
        \item a joint distribution $P_\Z$ over the exogenous variables $\{\Z_j\}_{j\leq n}$.
    \end{itemize} 
\end{definition}
 
Due to the directed acyclic structure of $\Gcal$, for each value of the exogenous variables, $\mathbb{S}$ leads to a unique solution for the vector of so-called endogenous variables $\Vb=[\Vb_1, \dots, \Vb_n]^\top$, such that the distribution $P_{\Z}$ entails a well-defined joint distribution over the endogenous variables $P(\Vb)$. For the purpose of the present work, we adopt a very general setting by: (1) not enforcing joint independence between the exogenous variables, allowing them to encode hidden confounding, (2) allowing endogenous and exogenous variable to be vector-valued. A given set of random variables, there may be described by different SCMs, e.g. by making different choices of grouping components in vector variables $\Vb_k$, or by choosing which will appear as exogenous or endogenous variables. We may switch between different such choices, provided those choices make a equivalent predictions regarding interventions that we introduce next. 

We will consider $do$-interventions in SCMs involve replacing one or more structural equation by a constant and modifying $\Gcal$ accordingly such that parents of the intervened equations are removed. 
An intervention transforms the original model $\Mcal =(\Gcal,\mathbb{S},P_\Z)$ into an intervened model $\Mcal^{do(\Vb_k=\vb_k)} =\left(\mathcal{G}^{do(\Vb_k=\vb_k)},\mathbb{S}^{do(\Vb_k=\vb_k)},P_\Z^{do(\Vb_k=\vb_k)}\right)$, where $\vb_k$ is the constant parameterizing the intervention.

\subsection{Unmeasured confounding and backdoor criterion}
%In the case of unmeasured confounding, some of the exogenous variable $\Ub_k$ are dependent. 
In the standard setting of causal effect estimation, one focuses on a graph comprising a pair of endogenous variables $(\Xb,\Yb)$ such that $\Gcal$ contains the edge $\Xb\to\Yb$. Hidden counfounding can then be encoded by non-independence of the respective exogenous variables $\Z_\Xb$ and $\Z_\Yb$ of these nodes, which we represent as a dashed bidirectional arrow in Figure~\ref{fig_models}a. Our framework amounts to constraining the structure of this hidden confounding, which is assumed to be representable as an hidden discrete common cause of two hidden latent variables $\Zb_X$ and $\Zb_Y$, as described by the causal diagram of Figure~\ref{fig_models}b, which does not have any dependence between exogenous variables of the nodes $\Xb$ and $\Yb$, because confounding is now explicitly represented by a common cause $H$. The additional variables appearing in this new graph, if they were to be observed, could be used to estimate the interventional probability $P(\Yb|\doo(\Xb=\xb))$ because they satisfied the so-called backdoor criterion \cite{Pearl_2009}: they block all backdoor paths between $\Xb$ and $\Yb$, i.e. those going through a parent of $\Xb$. Although latent variable are unobserved, additional assumption may allow to identify them from observational data. In particular, one way is to formulate the observations as a function of the latents, which can be done by introducing an invertible mapping $\phib:\Zb_X\to \X$, leading to the causal diagram of Figure~\ref{fig_models}c. 

We will focus on a case where it can be shown that we can infer and use $\Zb_Y$ as a backdoor adjustment variable, which leads to the following formula for the interventional distribution
\[
P(\Yb|\doo(\Xb))]= \int  P(\yb|\x,\zb_y) p(\zb_y)d\zb_y\,.
\]

%for $Y$ continuous, 
%\[
%\EE(Y|do(X=x))=\sum_l \EE(Y|X=x,L=l)P(L=l)
%\]

\section{Twins dataset}\label{app_twins}

The remaining confouding variables are: `risk factor, Lung', `risk factor Hemoglobinopathy',
 `risk factor, Incompetent cervix',
 `mom place of birth',
 `race of child',
 `total number of births before twins',
 `trimester prenatal care begun, 4 is none',
 `number of live births before twins',
 `married',
 `risk factor, Anemia',
 `risk factor, Hypertension, chronic',
 `risk factor, RH sensitization',
 `num of cigarettes /day, quantiled',
 `risk factor, tobacco use',
 `education category',
 `state of occurence FIPB',
 `medical person attending birth',
 `quintile number of prenatal visits',
 `US census region of mplbir',
 `dad race',
 `place of delivery',
 `risk factor, Renal disease',
 `mom race',
 `risk factor, Cardiac',
 `US census region of stoccfipb',
 `risk factor, Previous infant 4000+ grams',
 `US census region of brstate',
 `birth month Jan-Dec',
 `risk factor, Eclampsia',
 `risk factor, Other Medical Risk Factors',
 `octile age of father',
 `risk factor, alcohol use',
 `dad hispanic',
 `num of drinks /week, quantiled',
 `risk factor, Herpes',
 `mom hispanic',
 `risk factor, Hypertension, preqnancy-associated',
 `state of residence NCHS',
 `risk factor, Uterine bleeding',
 `risk factor, Diabetes',
 `sex of child',
 `risk factor Hvdramnios/Oliqohvdramnios',
 `risk factor, Previos pre-term or small',
 `adequacy of care'.

\newpage
\section*{NeurIPS Paper Checklist}

\begin{enumerate}

\item {\bf Claims}
    \item[] Question: Do the main claims made in the abstract and introduction accurately reflect the paper's contributions and scope?
    \item[] Answer: \answerYes{} % Replace by \answerYes{}, \answerNo{}, or \answerNA{}.
    \item[] Justification: Yes, claims are accurate.
    \item[] Guidelines:
    \begin{itemize}
        \item The answer NA means that the abstract and introduction do not include the claims made in the paper.
        \item The abstract and/or introduction should clearly state the claims made, including the contributions made in the paper and important assumptions and limitations. A No or NA answer to this question will not be perceived well by the reviewers. 
        \item The claims made should match theoretical and experimental results, and reflect how much the results can be expected to generalize to other settings. 
        \item It is fine to include aspirational goals as motivation as long as it is clear that these goals are not attained by the paper. 
    \end{itemize}

\item {\bf Limitations}
    \item[] Question: Does the paper discuss the limitations of the work performed by the authors?
    \item[] Answer: \answerYes{} % Replace by \answerYes{}, \answerNo{}, or \answerNA{}.
    \item[] Justification: Limitations are discussed in Section \ref{sec_discussion} with a separate `Limitations' section.
    \item[] Guidelines:
    \begin{itemize}
        \item The answer NA means that the paper has no limitation while the answer No means that the paper has limitations, but those are not discussed in the paper. 
        \item The authors are encouraged to create a separate "Limitations" section in their paper.
        \item The paper should point out any strong assumptions and how robust the results are to violations of these assumptions (e.g., independence assumptions, noiseless settings, model well-specification, asymptotic approximations only holding locally). The authors should reflect on how these assumptions might be violated in practice and what the implications would be.
        \item The authors should reflect on the scope of the claims made, e.g., if the approach was only tested on a few datasets or with a few runs. In general, empirical results often depend on implicit assumptions, which should be articulated.
        \item The authors should reflect on the factors that influence the performance of the approach. For example, a facial recognition algorithm may perform poorly when image resolution is low or images are taken in low lighting. Or a speech-to-text system might not be used reliably to provide closed captions for online lectures because it fails to handle technical jargon.
        \item The authors should discuss the computational efficiency of the proposed algorithms and how they scale with dataset size.
        \item If applicable, the authors should discuss possible limitations of their approach to address problems of privacy and fairness.
        \item While the authors might fear that complete honesty about limitations might be used by reviewers as grounds for rejection, a worse outcome might be that reviewers discover limitations that aren't acknowledged in the paper. The authors should use their best judgment and recognize that individual actions in favor of transparency play an important role in developing norms that preserve the integrity of the community. Reviewers will be specifically instructed to not penalize honesty concerning limitations.
    \end{itemize}

\item {\bf Theory Assumptions and Proofs}
    \item[] Question: For each theoretical result, does the paper provide the full set of assumptions and a complete (and correct) proof?
    \item[] Answer: \answerYes{} % Replace by \answerYes{}, \answerNo{}, or \answerNA{}.
    \item[] Justification: Proofs to all results are given in the Appendix \ref{app_proofs}. In addition, a proof sketch for the main result is given in the main text.
    \item[] Guidelines:
    \begin{itemize}
        \item The answer NA means that the paper does not include theoretical results. 
        \item All the theorems, formulas, and proofs in the paper should be numbered and cross-referenced.
        \item All assumptions should be clearly stated or referenced in the statement of any theorems.
        \item The proofs can either appear in the main paper or the supplemental material, but if they appear in the supplemental material, the authors are encouraged to provide a short proof sketch to provide intuition. 
        \item Inversely, any informal proof provided in the core of the paper should be complemented by formal proofs provided in appendix or supplemental material.
        \item Theorems and Lemmas that the proof relies upon should be properly referenced. 
    \end{itemize}

    \item {\bf Experimental Result Reproducibility}
    \item[] Question: Does the paper fully disclose all the information needed to reproduce the main experimental results of the paper to the extent that it affects the main claims and/or conclusions of the paper (regardless of whether the code and data are provided or not)?
    \item[] Answer: \answerYes{} % Replace by \answerYes{}, \answerNo{}, or \answerNA{}.
    \item[] Justification: Section \ref{sec_simulation} contains information about all the parameters used in the simulation results. Code to generate the synthetic data and to implement the method is provided in an anonymized zip file in the Supplementary Material.
    \item[] Guidelines:
    \begin{itemize}
        \item The answer NA means that the paper does not include experiments.
        \item If the paper includes experiments, a No answer to this question will not be perceived well by the reviewers: Making the paper reproducible is important, regardless of whether the code and data are provided or not.
        \item If the contribution is a dataset and/or model, the authors should describe the steps taken to make their results reproducible or verifiable. 
        \item Depending on the contribution, reproducibility can be accomplished in various ways. For example, if the contribution is a novel architecture, describing the architecture fully might suffice, or if the contribution is a specific model and empirical evaluation, it may be necessary to either make it possible for others to replicate the model with the same dataset, or provide access to the model. In general. releasing code and data is often one good way to accomplish this, but reproducibility can also be provided via detailed instructions for how to replicate the results, access to a hosted model (e.g., in the case of a large language model), releasing of a model checkpoint, or other means that are appropriate to the research performed.
        \item While NeurIPS does not require releasing code, the conference does require all submissions to provide some reasonable avenue for reproducibility, which may depend on the nature of the contribution. For example
        \begin{enumerate}
            \item If the contribution is primarily a new algorithm, the paper should make it clear how to reproduce that algorithm.
            \item If the contribution is primarily a new model architecture, the paper should describe the architecture clearly and fully.
            \item If the contribution is a new model (e.g., a large language model), then there should either be a way to access this model for reproducing the results or a way to reproduce the model (e.g., with an open-source dataset or instructions for how to construct the dataset).
            \item We recognize that reproducibility may be tricky in some cases, in which case authors are welcome to describe the particular way they provide for reproducibility. In the case of closed-source models, it may be that access to the model is limited in some way (e.g., to registered users), but it should be possible for other researchers to have some path to reproducing or verifying the results.
        \end{enumerate}
    \end{itemize}

\item {\bf Open access to data and code}
    \item[] Question: Does the paper provide open access to the data and code, with sufficient instructions to faithfully reproduce the main experimental results, as described in supplemental material?
    \item[] Answer: \answerYes{} % Replace by \answerYes{}, \answerNo{}, or \answerNA{}.
    \item[] Justification: Yes, link to code will be provided on the first page conditional on acceptance. It is provided with the submission as a zip file in the Supplementary Material to guarantee anonymity.
    \item[] Guidelines:
    \begin{itemize}
        \item The answer NA means that paper does not include experiments requiring code.
        \item Please see the NeurIPS code and data submission guidelines %(\url{https://nips.cc/public/guides/CodeSubmissionPolicy}) for more details.
        \item While we encourage the release of code and data, we understand that this might not be possible, so “No” is an acceptable answer. Papers cannot be rejected simply for not including code, unless this is central to the contribution (e.g., for a new open-source benchmark).
        \item The instructions should contain the exact command and environment needed to run to reproduce the results. See the NeurIPS code and data submission guidelines %(\url{https://nips.cc/public/guides/CodeSubmissionPolicy}) for more details.
        \item The authors should provide instructions on data access and preparation, including how to access the raw data, preprocessed data, intermediate data, and generated data, etc.
        \item The authors should provide scripts to reproduce all experimental results for the new proposed method and baselines. If only a subset of experiments are reproducible, they should state which ones are omitted from the script and why.
        \item At submission time, to preserve anonymity, the authors should release anonymized versions (if applicable).
        \item Providing as much information as possible in supplemental material (appended to the paper) is recommended, but including URLs to data and code is permitted.
    \end{itemize}

\item {\bf Experimental Setting/Details}
    \item[] Question: Does the paper specify all the training and test details (e.g., data splits, hyperparameters, how they were chosen, type of optimizer, etc.) necessary to understand the results?
    \item[] Answer: \answerYes{} % Replace by \answerYes{}, \answerNo{}, or \answerNA{}.
    \item[] Justification: These details can be seen in the provided code.
    \item[] Guidelines:
    \begin{itemize}
        \item The answer NA means that the paper does not include experiments.
        \item The experimental setting should be presented in the core of the paper to a level of detail that is necessary to appreciate the results and make sense of them.
        \item The full details can be provided either with the code, in appendix, or as supplemental material.
    \end{itemize}

\item {\bf Experiment Statistical Significance}
    \item[] Question: Does the paper report error bars suitably and correctly defined or other appropriate information about the statistical significance of the experiments?
    \item[] Answer: \answerNo{} % Replace by \answerYes{}, \answerNo{}, or \answerNA{}.
    \item[] Justification: While no formal error bars are shown, we show results for a number of draws from the data generating process and report the distribution of results in the synthetic data experiments. In the real-world data application, we show results for a number of hyperparameter choices and seeds. 
    \item[] Guidelines:
    \begin{itemize}
        \item The answer NA means that the paper does not include experiments.
        \item The authors should answer "Yes" if the results are accompanied by error bars, confidence intervals, or statistical significance tests, at least for the experiments that support the main claims of the paper.
        \item The factors of variability that the error bars are capturing should be clearly stated (for example, train/test split, initialization, random drawing of some parameter, or overall run with given experimental conditions).
        \item The method for calculating the error bars should be explained (closed form formula, call to a library function, bootstrap, etc.)
        \item The assumptions made should be given (e.g., Normally distributed errors).
        \item It should be clear whether the error bar is the standard deviation or the standard error of the mean.
        \item It is OK to report 1-sigma error bars, but one should state it. The authors should preferably report a 2-sigma error bar than state that they have a 96\% CI, if the hypothesis of Normality of errors is not verified.
        \item For asymmetric distributions, the authors should be careful not to show in tables or figures symmetric error bars that would yield results that are out of range (e.g. negative error rates).
        \item If error bars are reported in tables or plots, The authors should explain in the text how they were calculated and reference the corresponding figures or tables in the text.
    \end{itemize}

\item {\bf Experiments Compute Resources}
    \item[] Question: For each experiment, does the paper provide sufficient information on the computer resources (type of compute workers, memory, time of execution) needed to reproduce the experiments?
    \item[] Answer: \answerYes{} % Replace by \answerYes{}, \answerNo{}, or \answerNA{}.
    \item[] Justification: That information is provided in the relevant Sections of the paper.
    \item[] Guidelines:
    \begin{itemize}
        \item The answer NA means that the paper does not include experiments.
        \item The paper should indicate the type of compute workers CPU or GPU, internal cluster, or cloud provider, including relevant memory and storage.
        \item The paper should provide the amount of compute required for each of the individual experimental runs as well as estimate the total compute. 
        \item The paper should disclose whether the full research project required more compute than the experiments reported in the paper (e.g., preliminary or failed experiments that didn't make it into the paper). 
    \end{itemize}
    
\item {\bf Code Of Ethics}
    \item[] Question: Does the research conducted in the paper conform, in every respect, with the NeurIPS Code of Ethics ?%\url{https://neurips.cc/public/EthicsGuidelines}?
    \item[] Answer: \answerYes{} % Replace by \answerYes{}, \answerNo{}, or \answerNA{}.
    \item[] Justification: Yes, the research conducted here conforms the NeurIPS Code of Ethics.
    \item[] Guidelines:
    \begin{itemize}
        \item The answer NA means that the authors have not reviewed the NeurIPS Code of Ethics.
        \item If the authors answer No, they should explain the special circumstances that require a deviation from the Code of Ethics.
        \item The authors should make sure to preserve anonymity (e.g., if there is a special consideration due to laws or regulations in their jurisdiction).
    \end{itemize}

\item {\bf Broader Impacts}
    \item[] Question: Does the paper discuss both potential positive societal impacts and negative societal impacts of the work performed?
    \item[] Answer: \answerNo{} % Replace by \answerYes{}, \answerNo{}, or \answerNA{}.
    \item[] Justification: While the paper does not discuss negative societal impacts, it emphasizes that the results for the proposed causal inference technique rest on assumptions that need to be fulfilled for the method to work as expected.
    \item[] Guidelines:
    \begin{itemize}
        \item The answer NA means that there is no societal impact of the work performed.
        \item If the authors answer NA or No, they should explain why their work has no societal impact or why the paper does not address societal impact.
        \item Examples of negative societal impacts include potential malicious or unintended uses (e.g., disinformation, generating fake profiles, surveillance), fairness considerations (e.g., deployment of technologies that could make decisions that unfairly impact specific groups), privacy considerations, and security considerations.
        \item The conference expects that many papers will be foundational research and not tied to particular applications, let alone deployments. However, if there is a direct path to any negative applications, the authors should point it out. For example, it is legitimate to point out that an improvement in the quality of generative models could be used to generate deepfakes for disinformation. On the other hand, it is not needed to point out that a generic algorithm for optimizing neural networks could enable people to train models that generate Deepfakes faster.
        \item The authors should consider possible harms that could arise when the technology is being used as intended and functioning correctly, harms that could arise when the technology is being used as intended but gives incorrect results, and harms following from (intentional or unintentional) misuse of the technology.
        \item If there are negative societal impacts, the authors could also discuss possible mitigation strategies (e.g., gated release of models, providing defenses in addition to attacks, mechanisms for monitoring misuse, mechanisms to monitor how a system learns from feedback over time, improving the efficiency and accessibility of ML).
    \end{itemize}
    
\item {\bf Safeguards}
    \item[] Question: Does the paper describe safeguards that have been put in place for responsible release of data or models that have a high risk for misuse (e.g., pretrained language models, image generators, or scraped datasets)?
    \item[] Answer: \answerNo{} % Replace by \answerYes{}, \answerNo{}, or \answerNA{}.
    \item[] Justification: No risk for misuse.
    \item[] Guidelines:
    \begin{itemize}
        \item The answer NA means that the paper poses no such risks.
        \item Released models that have a high risk for misuse or dual-use should be released with necessary safeguards to allow for controlled use of the model, for example by requiring that users adhere to usage guidelines or restrictions to access the model or implementing safety filters. 
        \item Datasets that have been scraped from the Internet could pose safety risks. The authors should describe how they avoided releasing unsafe images.
        \item We recognize that providing effective safeguards is challenging, and many papers do not require this, but we encourage authors to take this into account and make a best faith effort.
    \end{itemize}

\item {\bf Licenses for existing assets}
    \item[] Question: Are the creators or original owners of assets (e.g., code, data, models), used in the paper, properly credited and are the license and terms of use explicitly mentioned and properly respected?
    \item[] Answer: \answerYes{} % Replace by \answerYes{}, \answerNo{}, or \answerNA{}.
    \item[] Justification: The use of code by other researchers is acknowledged.
    \item[] Guidelines:
    \begin{itemize}
        \item The answer NA means that the paper does not use existing assets.
        \item The authors should cite the original paper that produced the code package or dataset.
        \item The authors should state which version of the asset is used and, if possible, include a URL.
        \item The name of the license (e.g., CC-BY 4.0) should be included for each asset.
        \item For scraped data from a particular source (e.g., website), the copyright and terms of service of that source should be provided.
        \item If assets are released, the license, copyright information, and terms of use in the package should be provided. For popular datasets, %\url{paperswithcode.com/datasets} 
        has curated licenses for some datasets. Their licensing guide can help determine the license of a dataset.
        \item For existing datasets that are re-packaged, both the original license and the license of the derived asset (if it has changed) should be provided.
        \item If this information is not available online, the authors are encouraged to reach out to the asset's creators.
    \end{itemize}

\item {\bf New Assets}
    \item[] Question: Are new assets introduced in the paper well documented and is the documentation provided alongside the assets?
    \item[] Answer: \answerYes{} % Replace by \answerYes{}, \answerNo{}, or \answerNA{}.
    \item[] Justification: Code with documentation is provided.
    \item[] Guidelines:
    \begin{itemize}
        \item The answer NA means that the paper does not release new assets.
        \item Researchers should communicate the details of the dataset/code/model as part of their submissions via structured templates. This includes details about training, license, limitations, etc. 
        \item The paper should discuss whether and how consent was obtained from people whose asset is used.
        \item At submission time, remember to anonymize your assets (if applicable). You can either create an anonymized URL or include an anonymized zip file.
    \end{itemize}

\item {\bf Crowdsourcing and Research with Human Subjects}
    \item[] Question: For crowdsourcing experiments and research with human subjects, does the paper include the full text of instructions given to participants and screenshots, if applicable, as well as details about compensation (if any)? 
    \item[] Answer: \answerNA{} % Replace by \answerYes{}, \answerNo{}, or \answerNA{}.
    \item[] Justification: Not applicable.
    \item[] Guidelines:
    \begin{itemize}
        \item The answer NA means that the paper does not involve crowdsourcing nor research with human subjects.
        \item Including this information in the supplemental material is fine, but if the main contribution of the paper involves human subjects, then as much detail as possible should be included in the main paper. 
        \item According to the NeurIPS Code of Ethics, workers involved in data collection, curation, or other labor should be paid at least the minimum wage in the country of the data collector. 
    \end{itemize}

\item {\bf Institutional Review Board (IRB) Approvals or Equivalent for Research with Human Subjects}
    \item[] Question: Does the paper describe potential risks incurred by study participants, whether such risks were disclosed to the subjects, and whether Institutional Review Board (IRB) approvals (or an equivalent approval/review based on the requirements of your country or institution) were obtained?
    \item[] Answer: \answerNA{} % Replace by \answerYes{}, \answerNo{}, or \answerNA{}.
    \item[] Justification: Not applicable.
    \item[] Guidelines:
    \begin{itemize}
        \item The answer NA means that the paper does not involve crowdsourcing nor research with human subjects.
        \item Depending on the country in which research is conducted, IRB approval (or equivalent) may be required for any human subjects research. If you obtained IRB approval, you should clearly state this in the paper. 
        \item We recognize that the procedures for this may vary significantly between institutions and locations, and we expect authors to adhere to the NeurIPS Code of Ethics and the guidelines for their institution. 
        \item For initial submissions, do not include any information that would break anonymity (if applicable), such as the institution conducting the review.
    \end{itemize}

\end{enumerate}

\end{document}

\section{Introduction-PBMBversion}
One of the fundamental challenges of causal inference is the separation of the causal effect from confounding, that is, from statistical dependencies that arise from common causes of the candidate cause and effect. In Pearl's notation \cite{Pearl_2009} this difference is captured by the key contrast between the merely predictive conditional probability $P(Y|X)$ and the interventional probability $P(Y|\doo(X))$ underlying causal effects. When confounding variables are observed, confounding can be controlled for by a variety of covariate adjustment techniques \citep{imbens15,chernozhukov2018double}. The ability to also deconfound the causal effect in the case of \emph{unobserved} confounding is one of the motivations for the use of randomized controlled trials, but in many applications, collecting such data can be costly, unethical or even impossible to conduct. 
The challenge of how to deconfound the causal effect in purely observational settings has given rise to a variety of approaches that require different assumptions for identification. 
%An observed statistical dependence between a cause $X$ and an effect $Y$ can be a mixture of a causal and confounding effect. Confounding is generated by variables that cause both treatment and effect. As a consequence of confounding, the conditional distribution of outcome given treatment, $P(Y|X)$, is not in general equal to the interventional distribution of the outcome after an invervention on $X$, $P(Y|\doo(X=x)$ (using Pearl's notation, \cite{Pearl_2009}); \textit{correlation} is not \textit{causation}. Since the former is available to us through observations, but the latter is of special interest to scientists and policy-makers, alleviating the impacts of unmeasured confounding is an important goal for data science. %Experimental intervention is one way to do that: experiments can disentangle causal and confounded effects because the direct manipulation of treatment variables ensures that any relation to confounding variables is broken.

To address confounding in observational studies, researchers often collect data on suspected confounding variables and control for them by, for instance, using propensity scores \citep{imbens15} or double machine learning methods \citep{chernozhukov2018double}. Such methods rely on observing all relevant confounders (often called `conditional ignorability'). If unmeasured confounders act similarly than measured confounders, one can bound the true causal effects by performing sensitivity analyses \cite{cinelli20sensitivity}. However, this relies on assumptions about the informativeness of \textit{measured} confounders for \textit{unmeasured} confounders. Other approaches rely on knowledge about the underlying causal structure and access to variables at specific locations in that structure (e.g., front-door adjustment or instrumental variables). A third approach relies on restrictions of the functional form. For instance, \cite{JS, JSb} show how to compute a degree of confounding in multivariate linear models by formalizing the notion of independent causal mechanisms \cite{peters2017elements}. Detection of confounding is also possible when restricting attention to linear models with non-Gaussian error distributions. Such restrictions on the model class helps to detect confounding: \cite{tashiro14} show that the residual of a linear regression of the effect on the cause will show a statistical dependence with the cause if and only if the cause and effect are confounded.

In this paper, we contribute to the effort to address unmeasured confounding through restrictions of model classes. We do this by building on recent identifiability guarantees for latent variables and mixing functions (up to affine transformations) in deep latent variable models with mixture priors and and piecewise affine mappings between latent and observed variables, proven by \cite{kivva2022identifiability}, see Figure \ref{fig_models}c. We map these identifiability results to the canonical confounding model (Figure \ref{fig_models}a) by imposing a known causal structure among observed variables and specific constraints on the relation between latent and observed variables (Figure \ref{fig_models}b). 
%We show theoretically that these constraints yield identifiability of causal effects and that implementing corresponding architectural restrictions in flow-based models can be used to estimate sought causal effects despite unmeasured confounding. 
This strategy can be used to identify the causal effect despite (discrete) unobserved confounding. Implementing this approach in a flow-based model, 
we demonstrate its applicability to estimate the desired causal effects on both synthetic and real data. %Thus, without relying on measured confounders, we propose a method that can account for unmeasured confounding by imposing restrictions on the functional form of the data generating process.

\subsection{PARKED TEXT}\label{app_parked}

More succinctly, we can write,
\begin{equation}
    \Y = f_Y(f_X(\U_X),\U_2)\triangleq g(\U_X,\U_Y)
\end{equation}

\patrick{this section needs some work; FE:last paragraph of sec 2: I think this should be preceded by a brief discussion (subsubsection?) of the Kiva result, and then the assumptions needed to apply and restrict the Kiva model can be stated expplicitly. (so move sec 2.2 forward and integrate it here; also move forward lines 113-117 (While we are building on ))}The causal relation between a cause variable $\Xb$ and an effect $\Yb$ can be modeled by a structural equation \cite{Pearl_2009} of the form 
$\Yb \coloneqq \fb (\Xb,\Ub_y)$
where the measurable function $\fb$ represents the causal mechanism giving rise to $\Yb$, and $\Ub_y$ is the exogenous variable associated to $\Yb$, representing influences outside of the considered system on $\Yb$. The influence of $\Xb$ on $\Yb$ is then fully captured by the family of so-called interventional distributions $P(\Yb|\doo(\Xb=\xb))$ such that in our setting
\[
\fb(\xb,\Ub_y)\sim P(\Yb|\doo(\Xb=\xb))\,,\quad \mbox{with}\quad \Ub_x \sim P(\Ub_x)
\]\patrick{isn't there a $\Ub_y \sim P(\Ub_y)$ missing?}\frederick{I would add a joint distribution P(Ux, Uy) explicit; I am not quite sure I understand the notation $f \sim P()$; why not specify a generative model first and then specify the causal quantity}
In order to estimate these quantities (for each $\xb$) from observations, we need also a structural equation model for the cause, which, in absence of other observed variables, simply takes the form of the structural equation
$\Xb\coloneqq \phib (\Ub_x)$, with $\Ub_x$ an exogenous variable representing the unobserved external influences on $\Xb$. Under the assumption of \textit{causal sufficiency}, exogenous variables associated to all variables are mutually independent, which in our setting implies that $\Ub_y$ is independent from $\Ub_x$ and therefore from $\Xb$. This entails that
\[
(\phib(\Ub_x),\fb(\phib(\Ub_x),\Ub_y ) \sim P(\Xb,\Yb)
\]
If we assume $\phib$ invertible, $P(\Yb|\Xb=\xb)=P(\Yb|\Ub_x=\phib^{-1}(\xb))$ and independence of the exogenous variables implies
\[
\fb(\xb,\Ub_y ) \sim P(\Yb|\Xb=\xb)\,, \mbox{with}\quad \Ub_x \sim P(\Ub_x)
\]
therefore we see that
\[
P(\Yb|\Xb=\xb)=P(\Yb|\doo(\Xb=\xb))\,,
\]
and interventional distributions can be estimated from the joint distribution of observations $P(\Xb,\Yb)$ with standard tools.

\section{Old results}
We introduce:
\begin{itemize}
    \item $\phi$ is invertible deterministic.
    \item Assumption P1 (\textit{week non-degeneracy}): $\sum_{l=1}^L\Sigma_l$ is positive definite. This allows degeneracy for particular instances of $u$,
    \item Assumption F1 (\textit{partial invertibility}): for all values $\xb\in \Xcal$, $z_2\mapsto g(\xb,z_2)$ is invertible on its image. 
\end{itemize} 

\begin{proposition}
    Under P1 and F1, $(P(X,Z_2),f)$ is identifiable up to scalar affine reparameterization of $Z_2$. 
\end{proposition}

\michel{be explicit on what should be known about the latents (dimensions? graph?). E.g. if dimension has to be known, perhaps geometric ICM can help find it?}
\begin{proof}
    \textbf{Step 1}: \textit{Affine identifiability.}
    
    The above model can be rewritten as a piecewise affine injective mapping 
    \begin{align}
    \gb:&\,\quad\Zcal\to\Xcal \times \Ycal\,,\\
    &\begin{bmatrix}
    \zb_1\\
    z_2
    \end{bmatrix}
     \mapsto 
     \begin{bmatrix}
    \phi( \zb_1)\\
     g(\zb_1,z_2)
    \end{bmatrix}\,.
    \end{align}
    Therefore we get affine identifiability from \cite[Theorem 3.2]{kivva2022identifiability}.
    
    \textbf{Step 2}: \textit{Form restriction on the affine transformation due to partial observation.}\footnote{Restriction on the ambiguity that results because we only recover $\mathbf{g}, \mathcal{Z}$ up to affine transformation. The point here is that it is a very special ambiguity, namely one where $\mathbf{A}$ is diagonal.}
    Assume another solution $\Tilde{f}$, it can also be rewritten as an injective mapping
 \begin{align}
    \tilde{\gb}:&\,\quad\Zcal\to\Xcal \times \Ycal\,,\\
    &\begin{bmatrix}
    \zb_1\\
    z_2
    \end{bmatrix}
     \mapsto 
     \begin{bmatrix}
     \Tilde{\phi}(\zb_1)\\
     \tilde{g}(\zb_1,z_2)
    \end{bmatrix}\,.
    \end{align}
    By affine identifiability, $\tilde{\gb}^{-1}\circ \gb$ is an affine map $\zb\mapsto A\zb+\bb$. From the above we deduce that\footnote{This is because $\gb$ is lower triangular, therefore $\tilde{\gb}$ is lower triangular, therefore $\tilde{\gb}^{-1}$ is lower triangular, and therefore $\tilde{\gb}^{-1}\circ \gb$ is lower triangular.}
     \begin{align}
   A&=
     \begin{bmatrix}
     T %I_n
     & 0\\
     \ab & c
    \end{bmatrix}\,.
    \end{align}
    with $\ab$ an $n-1$ row vector, $T$ an invertible matrix and $c$ a non-vanishing scalar (due to invertibility of both functions). 
    
    \textbf{Step 3}: \textit{Further form restriction due to weak non-degeneracy. }
    By weak non-degeneracy
    $\sum_{l=1}^L \Sigma_l$ 
    is positive definite. As a consequence, $\sum_{l=1}^L (\Sigma_l)_{11}$ is also positive definite. 
    Moreover, because $\Z_1\indep Z_2|H,L$\patrick{conditional on H only, right?}, this matrix is block diagonal.  
    Then for the retrieved model
      $\sum_{l'=1}^{L'} \tilde{\Sigma}_{l'}=A \sum_u \Sigma_u A^\top$\patrick{this should be the covariance matrix of both Z1 and Z2, right?}
      is also positive definite and block diagonal because

     \begin{align}
   A\sum_{l=1}^{L} \Sigma_l A^\top &=
     \begin{bmatrix}
     T %I_n
     & 0\\
     \ab & c
    \end{bmatrix} \begin{bmatrix}
     \sum_{l=1}^{L}(\Sigma_l)_{11} %I_n
     & 0\\
     0 & \sum_{l=1}^{L}(\Sigma_l)_{22}
    \end{bmatrix}
    \begin{bmatrix}
     T^\top %I_n
     & \ab^\top\\
     0 & c
    \end{bmatrix}\\
    &=
     \begin{bmatrix}
     T %I_n
     & 0\\
     \ab & c
    \end{bmatrix} \begin{bmatrix}
     \sum_{l=1}^L(\Sigma_l)_{11}T^\top  %I_n
     & \sum_{l=1}^L(\Sigma_l)_{11}\ab^\top\\
     0 & \sum_{l=1}^L(\Sigma_l)_{22} c
    \end{bmatrix}\\
    &= \begin{bmatrix}
     T\sum_{l=1}^L(\Sigma_l)_{11}T^\top  %I_n
     & T\sum_{l=1}^L(\Sigma_l)_{11}\ab^\top\\
     \ab \sum_{l=1}^L(\Sigma_l)_{11}T^\top   & \sum_{l=1}^L(\Sigma_l)_{22} c^2+...
    \end{bmatrix}
    \,.
    \end{align}
      
      The off diagonal vector must be equal to zero because $\Z_1\indep Z_2|H,L$\patrick{conditional on H only, right?}. Therefore, we can write
      \begin{align}
          (\sum_{l'=1}^{L} \tilde{\Sigma}_{l'})_{21}&= 0 \\
          \ab \sum_{l'=1}^{L} (\Sigma_l)_{11} T^\top &= 0\\
          \ab \sum_{l'=1}^{L} (\Sigma_l)_{11}  &= 0 \; \text{because $T^\top$ is invertible}\\
          \ab &= 0 \; \text{because $\sum_{l'=1}^{L}(\Sigma_l)_{11}$ is positive definite and therefore invertible.}
      \end{align}

Consequently,

      \begin{align}
   A&=
     \begin{bmatrix}
     T & 0\\
     0 & c
    \end{bmatrix}\,,
    \end{align}
    which entails identifiability up to scalar affine reparametrization of $Z_2$ and affine invertible transformation of $Z_1$.

    More precisely, for all $\zb_1,z_2$, the composition of $\tilde{\gb}^{-1}$ with $\gb$ is ambiguous up to a diagonal affine transformation: 
    \[\tilde{\gb}^{-1}\circ \gb(\zb_1,z_2)=\begin{bmatrix}
        T\zb_1+\bb_1\\
        c z_2+b_2
    \end{bmatrix}
    \]
    Leading to 
    \[
    \gb(\zb_1,z_2)=\tilde{\gb}(
        T\zb_1+\bb_1,
        c z_2+b_2)
    \]
    For the $\Xb$ component this gives
    \[
\phi(\zb_1) = \tilde{\phi}(T\zb_1+\bb_1) 
    \]
    such that
    \[
\phi^{-1}(\xb) = T^{-1} \left(\tilde{\phi}^{-1}(\xb)-\bb_1\right) 
    \]
    because $(f \circ g)^{-1} = g^{-1} \circ f^{-1} $. And for the second,
    \[
    g(\zb_1,z_2) = \tilde{g}(T\zb_1+\bb_1, c z_2 +b_2)
    \]
    Finally we get the following relation for the causal mechanism
    \[
    f(\xb,z_2)=g(\phi^{-1}(\xb),z_2)=\tilde{g}(T\phi^{-1}(\xb)+\bb_1,c z_2+b_2)=\tilde{g}(\tilde{\phi}^{-1}(\xb),c z_2+b_2)=\tilde{f}(\xb,c z_2+b_2)
    \]    
\end{proof}
\subsection{Counterexample (recheck)}
The following counterexample shows that the non-degeneracy assumption on $Z_2$ is necessary for identifiability. 
\michel{example does not satisfy partial invertibility, fix it}
\begin{example}
Consider the model (with $Q=\{0\}$)
\begin{align}
    Z_1|L=l &\sim \Ncal (\mu_l,\sigma_l^2)\\
    Z_2 |L=l &\sim \Ncal(\nu_l,0)\\
    Y&=\alpha Z_1+ Z_2
\end{align}
Then $P,f$ is not identifiable.
\end{example}
\begin{proof}
    $P(X,Y)$ is a GMM with 
    \begin{align}
        \EE (X,Y)|l &=(\mu_l,\alpha\mu_l+\nu_l)\\
        \Cov (X,Y)|l &= \mbox{diag}(\sigma_l^2,\alpha^2 \sigma_l^2),
    \end{align}
    which is the same distribution as the one obtained with the model
    \begin{align}
    Z_1|L=l &\sim \Ncal (\mu_l,\sigma_l^2)\\
    Z_2 |L=l &\sim \Ncal({\sqrt{2}}(1-\frac{1}{\sqrt{2}})\mu_l+\frac{\sqrt{2}}{\alpha}\nu_l,\sigma_l^2)\\
    Y&=\alpha/\sqrt{2} Z_1+ \alpha/\sqrt{2} Z_2
\end{align}
\end{proof}

\subsection{Opposite causality case}\label{sec:oppgen}
%As illustrated in Fig.~\ref{fig:basegraph}(center), the fact that $\phi$ is deterministic invertible allows us to describe the data generative process as a latent variable model $(\Zb_1, Z_2)\to (\Xb,Y)$, where we replace the generative mechanism of $Y$ based on $\Xb$ by one based on $\Z_1$:
Let us consider the opposite case
\[
\Yb\coloneqq f(\phi(Z_1),\Zb_2)=g(Z_1,\Zb_2)
\]
with $\Yb$ multivariate effect and $X$ univariate cause.

The equations of this model are the following:
\begin{align}
    H &\sim \mathrm{Cat}(K_0,\boldsymbol{\pi}),\\
    L|H & \sim \mathrm{Cat}(K_1,p(L|H)), \\
    Q|H & \sim \mathrm{Cat}(K_2,p(Q|H)),\\
    Z_1|L=l&\sim \mathcal{N}(\mub_{l},\Sigma_{l}^2),\\
    \Zb_2|Q=q&\sim \mathcal{N}(\nu_{q},\Sigma_{q}),\\
    X&=\phi(Z_1), \\
    \Yb&= g(Z_1,\Zb_2).
\end{align}
%\patrick{$K_0 = K_1 \times K_2$ ?}
We introduce:
\begin{itemize}
    \item $\phi$ is invertible deterministic.
    \item Assumption P2 (\textit{week non-degeneracy}): $\sum_{q=1}^L\Sigma_q$ is positive definite. This allows degeneracy for particular instances of $u$,
    \item Assumption F2 (\textit{partial invertibility}): for all values $x\in \Xcal$, $\zb_2\mapsto g(x,\zb_2)$ is invertible on its image. 
\end{itemize} 

\begin{proposition}
    Under P2 and F2, $(P(X,Z_2),f)$ is identifiable up to scalar affine reparameterization of $Z_2$. 
\end{proposition}
\begin{proof}
    \textbf{Step 1}: \textit{Affine identifiability.}
    
    The above model can be rewritten as a piecewise affine injective mapping 
    \begin{align}
    \gb:&\,\quad\Zcal\to\Xcal \times \Ycal\,,\\
    &\begin{bmatrix}
    z_1\\
    \zb_2
    \end{bmatrix}
     \mapsto 
     \begin{bmatrix}
    \phi( z_1)\\
     g(z_1,\zb_2)
    \end{bmatrix}\,.
    \end{align}
    Therefore we get affine identifiability from \cite[Theorem 3.2]{kivva2022identifiability}.
    
    \textbf{Step 2}: \textit{Form restriction on the affine transformation due to partial observation.}\footnote{Restriction on the ambiguity that results because we only recover $\mathbf{g}, \mathcal{Z}$ up to affine transformation. The point here is that it is a very special ambiguity, namely one where $\mathbf{A}$ is diagonal.}
    Assume another solution $\Tilde{f}$, it can also be rewritten as an injective mapping
 \begin{align}
    \tilde{\gb}:&\,\quad\Zcal\to\Xcal \times \Ycal\,,\\
    &\begin{bmatrix}
    z_1\\
    \zb_2
    \end{bmatrix}
     \mapsto 
     \begin{bmatrix}
     \Tilde{\phi}(z_1)\\
     \tilde{g}(z_1,\zb_2)
    \end{bmatrix}\,.
    \end{align}
    By affine identifiability, $\tilde{\gb}^{-1}\circ \gb$ is an affine map $\zb\mapsto A\zb+\bb$. From the above we deduce that\footnote{This is because $\gb$ is lower triangular, therefore $\tilde{\gb}$ is lower triangular, therefore $\tilde{\gb}^{-1}$ is lower triangular, and therefore $\tilde{\gb}^{-1}\circ \gb$ is lower triangular.}
     \begin{align}
   A&=
     \begin{bmatrix}
     c %I_n
     & 0\\
     \ab^\top & T
    \end{bmatrix}\,.
    \end{align}
    with $\ab$ an $n-1$ row vector, $T$ an invertible matrix and $c$ a non-vanishing scalar (due to invertibility of both functions). 
    
    \textbf{Step 3}: \textit{Further form restriction due to weak non-degeneracy. }
    By weak non-degeneracy
    $\sum_{l=1}^L \Sigma_l$ 
    is positive definite. As a consequence, $\sum_{l=1}^L (\Sigma_l)_{11}$ is also positive definite. 
    Moreover, because $\Z_1\indep Z_2|H,L$\patrick{conditional on H only, right?}, this matrix is block diagonal.  
    Then for the retrieved model
      $\sum_{l'=1}^{L'} \tilde{\Sigma}_{l'}=A \sum_u \Sigma_u A^\top$\patrick{this should be the covariance matrix of both Z1 and Z2, right?}
      is also positive definite and block diagonal because

     \begin{align}
   A\sum_{l=1}^{L} \Sigma_l A^\top &=
     \begin{bmatrix}
     c %I_n
     & 0\\
     \ab^\top & T
    \end{bmatrix} \begin{bmatrix}
     \sum_{l=1}^{L}(\Sigma_l)_{11} %I_n
     & 0\\
     0 & \sum_{l=1}^{L}(\Sigma_l)_{22}
    \end{bmatrix}
    \begin{bmatrix}
     c %I_n
     & \ab\\
     0 & T^\top
    \end{bmatrix}\\
    &=
     \begin{bmatrix}
     c %I_n
     & 0\\
     \ab^\top & T
    \end{bmatrix} \begin{bmatrix}
     \sum_{l=1}^L(\Sigma_l)_{11}c  %I_n
     & \sum_{l=1}^L(\Sigma_l)_{11}\ab\\
     0 & \sum_{l=1}^L(\Sigma_l)_{22} T^\top
    \end{bmatrix}\\
    &= \begin{bmatrix}
     c^2\sum_{l=1}^L(\Sigma_l)_{11}  %I_n
     & c\sum_{l=1}^L(\Sigma_l)_{11}\ab\\
     \ab^\top \sum_{l=1}^L(\Sigma_l)_{11}c   & T\sum_{l=1}^L(\Sigma_l)_{22} T^\top+...
    \end{bmatrix}
    \,.
    \end{align}
      
      The off diagonal vector must be equal to zero because $Z_1\indep \Zb_2|H,L$\patrick{conditional on H only, right?}. Therefore, we can write
      \begin{align}
          (\sum_{l'=1}^{L} \tilde{\Sigma}_{l'})_{21}&= 0 \\
          \ab^\top \sum_{l'=1}^{L} (\Sigma_l)_{11} c &= 0\\
          \ab^\top \sum_{l'=1}^{L} (\Sigma_l)_{11}  &= 0 \; \text{because $c$ is invertible}\\
          \ab &= 0 \; \text{because $\sum_{l'=1}^{L}(\Sigma_l)_{11}$ is positive definite and therefore invertible.}
      \end{align}

Consequently,

      \begin{align}
   A&=
     \begin{bmatrix}
     c & 0\\
     0 & T
    \end{bmatrix}\,,
    \end{align}
    which entails identifiability up to scalar affine reparametrization of $Z_1$ and affine invertible transformation of $Z_2$.

    More precisely, for all $\zb_1,z_2$, the composition of $\tilde{\gb}^{-1}$ with $\gb$ is ambiguous up to a block diagonal affine transformation: 
    \[\tilde{\gb}^{-1}\circ \gb(\zb_1,z_2)=\begin{bmatrix}
        c z_1+b_1\\
        T \zb_2+\bb_2
    \end{bmatrix}
    \]
    Leading to 
    \[
    \gb(z_1,\zb_2)=\tilde{\gb}(
        c z_1+b_1,
        T \zb_2+\bb_2)
    \]
    For the $X$ component this gives
    \[
\phi(z_1) = \tilde{\phi}(cz_1+b_1) 
    \]
    such that
    \[
\phi^{-1}(\xb) = c^{-1} \left(\tilde{\phi}^{-1}(x)-b_1\right) 
    \]
    because $(f \circ g)^{-1} = g^{-1} \circ f^{-1} $. And for the second,
    \[
    g(z_1,\zb_2) = \tilde{g}(c z_1+b_1, T \zb_2 +\bb_2)
    \]
    Finally we get the following relation for the causal mechanism
    \[
    f(x,\zb_2)=g(\phi^{-1}(x),\zb_2)=\tilde{g}(c\phi^{-1}(x)+b_1,T \zb_2+\bb_2)=\tilde{g}(\tilde{\phi}^{-1}(x),T \zb_2+\bb_2)=\tilde{f}(\xb,T \zb_2+\bb_2)
    \]    
\end{proof}

\subsection{Additional comments}

The same result can be obtained for the opposite direction of causation, but here are the following non-trivial extensions:
\begin{itemize}
    \item what if $Y$ (or $X$) is discrete instead of continuous? (important for spurious features application). Use a discontinuous function? (this is included in Kivvaa, but strong injectivity conditions may be lost...
    \item what if the confounding is deterministic? (e.g. $Z_2$ is a mixture of dirac measures), obviously we cannot fully identify $f$, but maybe we can identify it on its support? It looks like the proof should be different... This introduces additional gradients at the boundary between two confonded clusters, but maybe gradients are more easily used for model selection than identifiability maybe? 
    \item what if confounding $Z_2$ is multivariate? should reduce to dimension of $Y$-> what if $Y$ is multivariate?
    \item We could stick to such result of idtenfiability in this ``smooth'' case, combine with identifiability in the more artificial setting of main text, and focus on improving the algorithmic part with geometric ICM as an inductive bias for model selection, which could help empirically in both cases. What is the model selection procedure in Kivva22? Does it work well? (geometric information could help to address the case of small variance of the mixture components in confounding pathway...
    \item what if the $X\to Y$ relation is probabilistic? Use known additive observational noise $\epsilon$ as simple case?
    \item dropping the identity assumption (bijective peacewise linear?, isometry (for matching the generic transformation setting)?)
    \item this point possibly for the discussion: \patrick{if I'm correct then Kivva et al do not need the mixture model to be a GMM}\michel{I guess it depends on the paper you are referring to, in ``Identifiability of deep generative models
    	without auxiliary information'' it is assumption P1, required for theorem 3.2 and 3.3}
\end{itemize}

\section{old}
\begin{align}
    f(x, y) &= \Tilde{f}(x,2y)\\
    &\text{without interaction:}\\
    f(x, y) &= x + y\\
    \frac{\partial f}{\partial x} & = \frac{\partial \Tilde{f}}{\partial x} \\
    \Tilde{f}(x, 2y) &= x + 2y\\
    &\text{but with interaction:}\\
    f(x, y) &= x \times y\\
    \Tilde{f}(x, 2y) &= x \times 2y\\
        \frac{\partial f}{\partial x} & \neq \frac{\partial \Tilde{f}}{\partial x} 
\end{align}

\begin{figure}
    \centering
    \includegraphics[width=.9\linewidth]{figures/graphs.png}
    \caption{Causal graphs of our discrete confounding setting. (Left) General setting, (center) mediation of confounding by latent mixture models, (right) equivalence to a constrained latent generative model.\michel{todo: replace the common cause by a dasked arrow, put letters}\patrick{in the rightmost figure make the Z1 to X link deterministic, thick bar}\patrick{add kivva model, fully connected}}
    \label{fig:basegraph}
\end{figure}

\section{reference collection for intro/related work}
\begin{itemize}
    \item universal approximation capabilities of ReLU networks \citep{huang2020relu}
    \item importance of identifiability, so beta VAE and so on are not going to help
    \item ica w overcomplete basis is what you would actually want to do, but we have found a way to deconfound that does not rely on overcomplete, instead we rely on gmm prior
    \item \cite{zheng23factor} another sensitivity paper, assuming ``factor-structured outcomes''
    \item \cite{JvK24id_caus_rep}: Nonparametric identifiability of causal representations from unknown interventions
    \item \cite{khemakhem20VAE_ICA}
    \item the proxy work \patrick{@michel, do you think we should include this in related work?}\cite{alabdulmohsin2023adapting, tsai2024proxy}
    \item more general mixture priors (see footnote 2 in kiva)
\end{itemize}

\section{Toy example}

% Acknowledgments---Will not appear in anonymized version
\acks{We thank a bunch of people and funding agency.}
\bibstyle{plainnat}
\bibliography{ginvcfl}

\appendix

\section{My Proof of Theorem 1}

This is a boring technical proof.

\section{My Proof of Theorem 2}

This is a complete version of a proof sketched in the main text.

\section{simulation DGP}

The data generating process is defined as follows:

\begin{itemize}
    \item Define the joint probability matrix \( M \) for \( H \) and \( L \):
    \[
    M = \begin{pmatrix}
    0.25 + x - y & 0.25 - x \\
    0.25 - x & 0.25 + x + y
    \end{pmatrix}
    \]
    where \( x \) and \( y \) are given constants.

    \item Sample \( H \) and \( L \) from the joint distribution defined by \( M \).

    \item Generate \( Z_1 \) from a Gaussian, dependent on \( H \). For \( H = 0 \):
    \[
    Z_1 \sim \mathcal{N}(\mu_{H0},0.5)
    \]
    and for \( H = 1 \):
    \[
    Z_1 \sim \mathcal{N}(\mu_{H1}, 0.5)
    \]

    \item Set $$X = Z_1.$$
    
    \item Generate \( Z_2 \) from a Gaussian, dependent on \( L \). For \( L = 0 \):
    \[
     Z_2 \sim \mathcal{N}(\mu_{L0}, 0.5)
    \]
    and for \( L = 1 \):
    \[
     Z_2 \sim \mathcal{N}(\mu_{L1}, 0.5)
    \]
    \item Set $$C = Z_2.$$
    \item Generate \( Y \) as:
    \[
    Y = C + \beta_X {X} + U_Y
    \]
    where \( \beta_X \) is a the true causal coefficient, \( U_Y \) is a standard Gaussian noise term
\end{itemize}

So, the implied transformation matrix from $\mathcal{Z}$ to $\mathcal{X \times Y}$ is as follows:

\begin{equation}
\begin{bmatrix}
1 & 0 \\
1 & 1 \\
\end{bmatrix}
\begin{bmatrix}
Z_1 \\
Z_2 \\
\end{bmatrix}
=
\begin{bmatrix}
X \\
Y \\
\end{bmatrix}
\end{equation}
\end{document}